\newtheorem{fact}[theorem]{Fact}
\newcommand{\nodes}{\mathit{nodes}}
\newcommand{\edges}{\mathit{edges}}
\newcommand{\HG}{{\cal H}}
\newcommand{\JT}{J\!T}
\newcommand{\vars}{\mathit{var}}
\newcommand{\A}{\mathcal{A}}
\newcommand{\B}{\mathcal{B}}
\newcommand{\V}{\mathcal{V}}
\newcommand{\size}[1]{\lVert #1 \rVert}
\newcommand{\Pol}{\mbox{\rm P}}
\newcommand{\NP}{\mbox{\rm NP}}
\newcommand{\FPT}{\mbox{\rm FPT}}
\newcommand{\tuple}[1]{\langle#1\rangle}
\newcommand{\nop}[1]{}
\newcommand{\longv}[1]{}
\newcommand{\tpCovered}{\mbox{\it tp-covered}}
\newcommand{\CSP}{\mbox{\rm CSP}}
\newcommand{\ECSP}{\mbox{\rm ECSP}}
\renewcommand{\A}{\mathbb{A}}
\renewcommand{\B}{\mathbb{B}}
\newcommand{\alg}{$\mathtt{ComputeAllSolutions}_{\scriptsize \mbox{\tt DM}}$}
\newcommand{\findCertificate}{$\mathtt{ComputeCertifiedSolutions}_{\scriptsize \mbox{\tt DM}}$}
\newcommand{\bm}[1]{{\bf #1}}
\newcommand{\boA}{\bm{A}}
\newcommand{\all}{\bm{-}}
\newcommand{\lDM}{\ell\mbox{-\tt DM}}
\newcommand{\rDM}{r\mbox{-\tt DM}}
\newcommand{\DRV}{\mathit{drv}}
\newcommand{\dom}{\mathit{dom}}
\newcommand{\GAC}{{\tt GAC}}
\begin{document}

\title{On The Power of Tree Projections:\\
Structural Tractability of Enumerating CSP Solutions}

\author{Gianluigi Greco\inst{1}\and Francesco Scarcello\inst{2}}

\institute{
  Dept. of Mathematics\inst{1} and DEIS\inst{2}, University of Calabria, 87036, Rende, Italy\\
  {\tt ggreco@mat.unical.it}, {\tt scarcello@deis.unical.it}\\
}

\maketitle

\pagestyle{plain}

\vspace{-3mm}
\begin{abstract} The problem of deciding whether CSP instances admit solutions has been deeply
studied in the literature, and several structural tractability results have been derived so far. However,
constraint satisfaction comes in practice as a computation problem where the focus is either on finding one
solution, or on enumerating all solutions, possibly projected to some given set of {output variables}.
The paper investigates the structural tractability of the problem of enumerating (possibly projected) solutions, where tractability means here
computable with polynomial delay (WPD), since in general exponentially many solutions may be computed. A general framework based on the notion
of tree projection of hypergraphs is considered, which generalizes all known decomposition methods.
Tractability results have been obtained both for classes of structures where output variables are part of their specification, and for classes
of structures where computability WPD must be ensured for any possible set of output variables. These results are shown to be tight, by
exhibiting dichotomies for classes of structures having bounded arity and where the tree decomposition method is considered.
\end{abstract}

\section{Introduction}

\subsection{Constraint Satisfaction and Decomposition Methods}

\emph{Constraint satisfaction} is often formalized as a homomorphism problem that takes as input two finite relational structures $\A$
(modeling variables and scopes of the constraints) and $\B$ (modeling the relations associated with constraints), and asks whether there is a
homomorphism from $\A$ to $\B$. Since the general problem is $\NP$-hard, many restrictions have been considered in the literature, where the
given structures have to satisfy additional conditions. In this paper, we are interested in restrictions imposed on the (usually said)
left-hand structure, i.e., $\A$ must be taken from some suitably defined class $\boA$ of structures, while $\B$ is any arbitrary structure from
the class ``$\all$'' of all finite structures.\footnote{Note that the finite property is a feature of this framework, and not a simplifying
assumption. E.g., on structures with possibly infinite domains, the open question in~\cite{GS84} (just recently answered by \cite{GS10} on
finite structures) would have been solved in 1993~\cite{SS93}.} Thus, we face the so-called \emph{uniform} constraint satisfaction problem,
shortly denoted as $\mbox{CSP}(\boA,\all)$, where both structures are part of the input (nothing is fixed).

The decision problem  $\mbox{CSP}(\boA,\all)$ has intensively been studied in the literature, and various classes of structures over which it
can be solved in polynomial time have already been singled out (see \cite{cjg-08,gott-etal-00,grohe-marx-06,adler08}, and the references
therein). These approaches, called {\em decomposition methods}, are based on properties of the hypergraph $\HG_\A$ associated with each
structure $\A\in\boA$. In fact, it is well-known that, for the class $\boA_a$ of all structures whose associated hypergraphs are acyclic,
$\mbox{CSP}(\boA_a,\all)$ is efficiently solvable by just enforcing \emph{generalized arc consistency} ($\GAC$)---roughly, by filtering
constraint relations until every pair of constraints having some variables $\bar X$ in common agree on $\bar X$ (that is, they have precisely
the same set of allowed tuples of values on these variables $\bar X$).

Larger ``islands of tractability'' are then identified by generalizing hypergraph acyclicity. To this end, every decomposition method {\tt DM}
associates with any hypergraph $\HG_\A$ some measure $w$ of its cyclicity, called the {\tt DM}-\emph{width} of $\HG_\A$. The tractable classes
$\boA$ of instances (according to {\tt DM}) are those (with hypergraphs) having bounded width, that is, whose degree of cyclicity is below some
fixed threshold. For every instance $\A$ in such a class $\boA$ and every structure $\B$, the instance $(\A,\B)$ can be solved in
polynomial-time by exploiting the solutions of a set of suitable subproblems, that we call {\em views}, each one solvable in polynomial-time
(in fact, exponential in the---fixed---width, for all known methods). In particular, the idea is to arrange some of these views in a tree,
called decomposition, in order to exploit the known algorithms for acyclic instances. In fact, whenever such a tree exists, instances can be
solved by just enforcing ${\GAC}$ on the available views, even without computing explicitly any decomposition. This very general approach
traces back to the seminal database paper~\cite{GS84}, and it is based on the graph-theoretic notion of {\em tree-projection} of the pair of
hypergraphs $(\HG_\A,\HG_\V)$, associated with the input structure $\A$ and with the structure $\V$ of the available views, respectively (tree
projections are formally defined in Section~\ref{sec:framework}).

For instance, assume that the fixed threshold on the width is $k$: in the {\em generalized hypertree-width} method~\cite{gott-etal-03}, the
available views are all subproblems involving at most $k$ constraints from the given CSP instance; in the case of {\em treewidth}~\cite{RS84},
the views are all subproblems involving at most $k$ variables; for {\em fractional hypertree-width}, the views are all subproblems having
fractional cover-width at most $k$  (in fact, if we require that they are computable in polynomial-time, we may instead use those subproblems
defined in~\cite{M09} to compute a $O(k^3)$ approximation of this notion).

Note that, for the special case of generalized hypertree-width, the fact that enforcing $\GAC$ on all clusters of $k$ constraints is sufficient
to solve the given instance, without computing a decomposition, has been re-derived in~\cite{CD05} (with proof techniques different from those
in~\cite{GS84}). Moreover, \cite{CD05} actually provided a stronger result, as it is proved that this property holds even if there is some
homomorphically equivalent subproblem having generalized hypertree-width at most $k$. However, the corresponding {\em only if} result is
missing in that paper, and characterizing the precise power of this $\GAC$ procedure for the views obtained from all clusters of $k$
constraints (short: $k$-$\GAC$) remained an open question. For any class $\boA$ of instances having bounded arity (i.e., with a fixed maximum
number of variables in any constraint scope of every instance of the class), the question has been answered in~\cite{ABD07}: $\forall
\A\in\boA$, $k$-$\GAC$ is correct for every right-hand structure $\B$ if, and only if, the core of $\A$ has tree width at most $k$ (recall that
treewidth and generalized hypertree-width identify the same set of bounded-arity tractable classes).
In its full version, the answer to this open question follows from a recent result in~\cite{GS10} (see Theorem~2-bis).

In fact, for any recursively enumerable class of bounded-arity structures $\boA$, it is known that this method
is essentially optimal: $\mbox{CSP}(\boA,\all)$ is solvable in polynomial time \emph{if, and only if,} the cores
of the structures in $\boA$ have bounded treewidth (under standard complexity theoretic assumptions)~\cite{G07}.
Note that the latter condition may be equivalently stated as follows: for every $\A\in\boA$ there is some $\A'$
homomorphically equivalent to $\A$ and such that its treewidth is below the required fixed threshold. For short,
we say that such a class has bounded treewidth modulo homomorphic equivalence.

Things with unbounded-arity classes are not that clear. Generalized hypertree-width does not characterize all classes of (arbitrary) structures
where $\mbox{CSP}(\boA,\all)$ is solvable in polynomial time~\cite{grohe-marx-06}. It seems that a useful characterization may be obtained by
relaxing the typical requirement that views are computable in polynomial time, and by requiring instead that such tasks are fixed-parameter
tractable ($\FPT$)~\cite{down-fell-99}. In fact, towards establishing such characterization, it was recently shown in \cite{M10} that (under
some reasonable technical assumptions) the problem  $\mbox{CSP}(\mathcal{H})$, i.e., $\mbox{CSP}(\boA,\all)$ restricted to the instances whose
associated hypergraphs belong to the class $\mathcal{H}$, is $\FPT$ \emph{if, and only if,} hypergraphs in $\mathcal{H}$ have bounded
\emph{submodular} width---a new hypergraph measure more general than \emph{fractional} hypertree-width and, hence, than generalized
hypertree-width.

It is worthwhile noting that the above mentioned tractability results for classes of instances defined modulo homomorphically equivalence are
actually tractability results for the {\em promise} version of the problem. In fact, unless $\Pol=\NP$, there is no polynomial-time algorithm
that may check whether a given instance $\A$ actually belongs to such a class $\boA$. In particular, it has been observed by different
authors~\cite{SGG08,BDGM09} that there are classes of instances having bounded treewidth modulo homomorphically equivalence for which answers
computable in polynomial time cannot be trusted. That is, unless $\Pol=\NP$, there is no efficient way to distinguish whether a ``yes'' answer
means that there is some solution of the problem, or that $\A\not\in\boA$.

In this paper, besides promise problems, we also consider the so-called {\em no-promise} problems, which seem
more appealing for practical applications. In this case, either certified solutions are computed, or the promise
$\A\in\boA$ is correctly disproved. For instance, the algorithm in~\cite{CD05} solves the no-promise
search-problem of computing a homomorphism for a given CSP instance $(\A,\B)$. This algorithm either computes
such a homomorphism or concludes that $\HG_\A$ has generalized hypertree-width greater than $k$.

\subsection{Enumeration Problems}

While the structural tractability of deciding whether CSP instances admit solutions has been deeply studied in
the literature, the structural tractability of the corresponding computation problem received considerably less
attention so far~\cite{BDGM09}, though this is certainly a more appealing problem for practical applications. In
particular, it is well-known that for classes of CSPs where the decision problem is tractable and a
self-reduction argument applies the enumeration problem is tractable too~\cite{DI92,C04}. Roughly, these classes
have a suitable closure property such that one may fix values for the variables without going out of the class,
and thus may solve the computation problem by using the (polynomial-time) algorithm for the decision problem as
an oracle. In fact, for the non-uniform CSP problem, the tractability of the decision problem always entails the
tractability of the search problem~\cite{C04}. As observed above, this is rather different from what happens in
the uniform CSP problem that we study in this paper, where this property does not hold (see~\cite{SGG08,BDGM09},
and Proposition~\ref{prop:NP}), and thus a specific study for the computation problem is meaningful and
necessary.

In this paper, we embark on this study, by focusing on the problem $\ECSP$ of enumerating (possibly projected) solutions. Since even easy
instances may have an exponential number of solutions, tractability means here having algorithms that compute solutions \emph{with polynomial
delay} (WPD): An algorithm $\rm M$ solves WPD a computation problem $\rm P$ if there is a polynomial $p(\cdot)$ such that, for every instance
of $\rm P$ of size $n$,  $\rm M$ discovers if there are no solutions in time $O(p(n))$; otherwise, it outputs all  solutions in such a way that
a new solution is computed within $O(p(n))$ time from the previous one.

Before stating our contribution, it is worthwhile noting that there are different facets of the enumeration problem, and thus different
research directions to be explored:

\smallskip

({\em Which Decomposition Methods?}) We considered the more general framework of the tree projections, where subproblems (views) may be
completely arbitrary, so that our results are smoothly inherited by all (known) decomposition methods. We remark that this choice posed
interesting technical challenges to our analysis, and called for solution approaches that were not explored in the earlier literature on
traditional methods, such as treewidth. For instance, in this general context, we cannot speak anymore of ``the core'' of a structure, because
different isomorphic cores may have different structural properties with respect to the available views.

\smallskip

({\em Only full solutions or possibly projected solutions?}) In this paper, an $\ECSP$ instance is a triple
$(\A,\B,O)$, for which we have to compute all solutions (homomorphisms) projected to a set of desired output
variables $O$, denoted by $\A^\B[O]$. We believe this is the more natural approach. Indeed, modeling real-world
applications through CSP instances typically requires the use of ``auxiliary'' variables, whose precise values
in the solutions are not relevant for the user, and that are (usually) filtered-out from the output. In these
cases, computing all combinations of their values occurring in solutions means wasting time, possibly
exponential time. Of course, this aspect is irrelevant for the problem of computing just one solution, but is
crucial for the enumeration problem.

\smallskip

({\em Should classes of structures be aware of output variables?}) This is an important technical question. We are interested in identifying
classes of tractable instances based on properties of their left-hand structures, while right-hand structures have no restrictions. What about
output variables? In principle, structural properties may or may not consider the possible output variables, and in fact both approaches have
been explored in the literature (see, e.g.,~\cite{G07}), and both approaches are dealt with in this paper. In the former output-aware case,
possible output variables are suitably described in the instance structure. Unlike previous approaches that considered additional ``virtual''
constraints covering together all possible output variables~\cite{G07}, in this paper possible output variables are described as those
variables $X$ having a domain constraint $\dom(X)$, that is, a distinguished unary constraint specifying the domain of this variable. Such
variables are said domain restricted.
In fact, this choice reflects the classical approach in constraint satisfaction systems, where variables are typically associated with domains,
which are heavily exploited by constraint propagation algorithms. Note that this approach does not limit the number of solutions, while in the
tractable classes considered in~\cite{G07} only instances with a polynomial number of (projected) solutions may be dealt with.
As far as the latter case of arbitrary sets of output variables is considered, observe that in general stronger conditions are expected to be
needed for tractability. Intuitively, since we may focus on any desired substructure, no strange situations may occur, and the full instance
should be really tractable.

\subsection{Contribution}

\paragraph{Output-aware classes of $\ECSP$s:}
\begin{description}
  \item[(1)] We define a property for pairs $(\A,O)$, where $\A$ is a structure and $O\subseteq A$ is a set of variables, that allows us to
      characterize the classes of tractable instances. Roughly, we say that $(\A,O)$ is $\tpCovered$ through the decomposition method {\tt
      DM} if variables in $O$ occur in a tree projection of a certain hypergraph w.r.t. to the (hypergraph associated with the) views
      defined according to {\tt DM}.
  \item[(2)] We describe an algorithm that solves the promise enumeration problem, by computing with polynomial delay all solutions of a given instance $(\A,\B,O)$, whenever $(\A,O)$ is $\tpCovered$ through {\tt DM}.
  \item[(3)] For the special case of (generalized hyper)tree width, we show that the above condition is also necessary for the correctness of the proposed algorithm (for every $\B$). In fact, for these traditional decomposition methods we now have a complete characterization of the power of the $k$-$\GAC$ approach.
  \item[(4)] For recursively enumerable classes of structures having bounded arity, we exhibit a dichotomy showing that the above tractability result is tight, for {\tt DM} = treewidth
      (and assuming $\FPT\neq W[1]$).
\end{description}

\paragraph{$\ECSP$ instances over arbitrary output variables:}
\begin{description}
  \item[(1)] We describe an algorithm that, on input $(\A,\B,O)$, solves the no-promise enumeration problem. In particular, either all solutions are computed, or it infers that there exists no tree projection of $\HG_\A$ w.r.t. $\HG_\V$ (the hypergraph associated with the views defined according to {\tt DM}).
      This algorithm generalizes to the tree projection framework the enumeration algorithm of projected solutions recently proposed for the special case of treewidth~\cite{BDGM09}.

  \item[(2)] Finally, we give some evidence that, for bounded arity classes of instances, we cannot do better than this. In particular, having bounded width tree-decompositions of the full structure seems a necessary condition for enumerating WPD.
We speak of ``evidence,'' instead of saying that our result completely answers the open question
in~\cite{G07,BDGM09}, because our dichotomy theorem focuses on classes of structures satisfying the technical property of being closed under taking minors (in fact, the same property assumed in the first dichotomy result on the complexity of the decision problem on classes of graphs~\cite{GSS01}).
\end{description}

\section{Preliminaries: Relational Structures and Homomorphisms}\label{sec:framework}

A constraint satisfaction problem may be formalized as a relational homomorphism problem. A vocabulary $\tau$ is a finite set of relation
symbols of specified arities. A relational structure $\A$ over $\tau$ consists of a universe $A$ and an $r$-ary relation $R^\A\subseteq A^r$,
for each relation symbol $R$ in $\tau$.

If $\A$ and $\A'$ are two relational structures over disjoint vocabularies, we denote by $\A\uplus\A'$ the relational structure over the
(disjoint) union of their vocabularies, whose domain (resp., set of relations) is the union of those of $\A$ and $\A'$.

A {\em homomorphism} from a relational structure $\A$ to a relational structure $\B$ is a mapping $h: A \mapsto B$ such that, for every
relation symbol $R\in\A$, and for every tuple $\tuple{a_1,\ldots,a_r}\in R^\A$, it holds that $\tuple{h(a_1),\ldots,h(a_r)}\in R^\B$. For any
set $X\subseteq A$, denote by $h[X]$ the restriction of $h$ to $X$. The set of all possible homomorphisms from $\A$ to $\B$ is denoted by
$\A^\B$, while $\A^\B[X]$ denotes the set of their restrictions to $X$.

An instance of the constraint satisfaction problem ($\CSP$) is a pair $(\A,\B)$ where $\A$ is called a \emph{left-hand structure} (short:
$\ell$-structure) and $\B$ is called a \emph{right-hand structure} (short: $r$-structure). In the classical decision problem, we have to decide
whether there is a homomorphism from $\A$ to $\B$, i.e., whether $\A^\B\neq\emptyset$. In an instance of the corresponding enumeration problem
(denoted by $\ECSP$) we are additionally given a set of {\em output} elements $O\subseteq A$; thus, an instance has the form $(\A,\B,O)$. The
goal is to compute the restrictions to $O$ of all possible homomorphisms from $\A$ to $\B$, i.e., $\A^\B[O]$.
%fs Modifica minimale, ma io metterei direttamente true.
If $O=\emptyset$, the computation problem degenerates to the decision one. Formally, let $h_\phi : \emptyset\mapsto {\it true}$ denote (the
constant mapping to) the Boolean value {\it true}; then, define $\A^\B[\emptyset]=\{h_\phi\}$ (resp., $\A^\B[\emptyset]=\emptyset$) if there is
some (resp., there is no) homomorphism from $\A$ to $\B$.

In the constraint satisfaction jargon, the elements of $A$ (the domain of the $\ell$-structure $\A$) are the variables, and there is a
constraint $C=(\tuple{a_1\ldots,a_r},R^\B)$  for every tuple $\tuple{a_1\ldots,a_r}\in R^A$  and every relation symbol $R\in\tau$. The tuple of
variables is usually called the scope of $C$, while $R^\B$ is called the relation of $C$. Any homomorphism from $\A$ to $\B$ is thus a mapping
from the variables in $A$ to the elements in $B$ (often called domain values) that satisfies all constraints, and it is also called a solution
(or a projected solution, if it is restricted to a subset of the variables).

Two relational structures $\A$ and $A'$ are homomorphically equivalent if there is a homomorphism from $\A$ to
$\A'$ and vice-versa. A structure $\A'$ is a substructure of $\A$ if $A'\subseteq A$ and $R^{\A'}\subseteq R^A$,
for each symbol $R\in\tau$. Moreover, $\A'$ is a \emph{core} of $\A$ if it is a substructure of $\A$ such that:
\emph{(1)} there is a homomorphism from $\A$ to $\A'$, and \emph{(2)} there is no substructure $\A''$ of $\A'$,
with $\A''\neq\A'$, satisfying \emph{(1)}.

\section{Decomposition Methods, Views, and Tree Projections}\label{sec:views}

Throughout the following sections we assume that $(\A,\B)$ is a given connected CSP instance, and we shall we
shall seek to compute its solutions (possibly restricted to a desired set of output variables) by combining the
solutions of suitable sets of subproblems, available as additional distinguished constraints called
\emph{views}.

Let $\A_\V$ be an $\ell$-structure with the same domain as $\A$. We say that $\A_\V$ is a \emph{view structure} (short: $v$-structure) if

\vspace{-3mm}\begin{itemize}
\item its vocabulary $\tau_\V$ is disjoint from the vocabulary $\tau$ of $\A$;
\item every relation $R^{\A_\V}$ contains a single tuple whose variables will be denoted by $\vars(R^{\A_\V})$. That is, there is a one-to-one correspondence between views and relation symbols in $\tau_\V$, so that we shall often use the two terms interchangeably;
\item for every relation $R\in \tau$ and every tuple $t\in R^\A$, there is some relation $R_t\in\tau_\V$, called {\em base view}, such that
    $\{t\}=R_t^{\A_\V}$, i.e., for every constraint in $\A$ there is a corresponding view in $\A_\V$.
\end{itemize}

\vspace{-2mm} \noindent Let $\B_\V$ be an $r$-structure. We say that $\B_\V$ is {\em legal} (w.r.t. $\A_\V$ and
$(\A,\B)$) if

\vspace{-3mm}\begin{itemize}
\item its vocabulary is $\tau_\V$;
\item For every view $R\in \tau_\V$, $R^{\B_\V}\supseteq \A^\B[w]$ holds, where $w=\vars(R^{\A_\V})$. That is, every subproblem is not more restrictive than the full problem.
\item For every base view $R_t\in \tau_\V$, $R_t^{\B_\V}\subseteq R^\B$. That is, any base view is at least as restrictive as the ``original'' constraint associated with it.
\end{itemize}

%todo Aggiungiamo qualche chiacchiera tipo pods per motivare che è w.l.o.g. ?

The following fact immediately follows from the above properties.
\begin{fact}
Let $\B_\V$ be any $r$-structure that is legal w.r.t. $\A_\V$ and $(\A,\B)$. Then, $\forall O\subseteq A$, the ECSP instance $(\A_\V,\B_\V,O)$
has the same set of solutions as $(\A,\B,O)$.
\end{fact}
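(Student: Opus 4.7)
The plan is to prove the stronger statement that the full homomorphism sets coincide, namely $\A_\V^{\B_\V} = \A^\B$ as sets of mappings from $A$ to the common codomain; the claim about restrictions to any $O \subseteq A$ is then immediate by definition of $h[O]$. Since $\A$ and $\A_\V$ share the same universe $A$, it makes sense to compare their homomorphisms as functions with the same domain, and this reduction lets me avoid any subtlety about how $O$ is chosen.

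For the inclusion $\A^\B \subseteq \A_\V^{\B_\V}$, I take an arbitrary $h \in \A^\B$ and check the homomorphism condition against every relation symbol $R \in \tau_\V$. By definition of $v$-structure, $R^{\A_\V}$ contains a single tuple, whose list of variables is $w = \vars(R^{\A_\V})$, so the only thing to verify is that $h[w] \in R^{\B_\V}$. This is exactly what the first legality condition provides: $R^{\B_\V} \supseteq \A^\B[w]$, and $h[w]$ lies in $\A^\B[w]$ since $h \in \A^\B$.

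For the reverse inclusion $\A_\V^{\B_\V} \subseteq \A^\B$, I take $h \in \A_\V^{\B_\V}$ and need to verify that for every $R \in \tau$ and every $t \in R^\A$, the image $h(t)$ lies in $R^\B$. Here I exploit the base-view hypothesis: for each such pair $(R,t)$ there exists $R_t \in \tau_\V$ with $R_t^{\A_\V} = \{t\}$. Applying the homomorphism property of $h$ to the relation symbol $R_t$ gives $h(t) \in R_t^{\B_\V}$, and the second legality condition $R_t^{\B_\V} \subseteq R^\B$ finishes the job.

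No serious obstacle arises; the argument is essentially bookkeeping, and the only thing to be careful about is reading off the right legality clause for each direction (the $\supseteq$ condition on generic views powers one direction, the $\subseteq$ condition on base views powers the other). The base-view axiom is exactly what prevents $\A_\V^{\B_\V}$ from being strictly larger than $\A^\B$, while the $\supseteq$ condition prevents it from being strictly smaller; together they pin down equality.
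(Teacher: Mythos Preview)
Your proof is correct and is exactly the intended unfolding of the definitions; the paper itself gives no proof beyond the remark that the fact ``immediately follows from the above properties.'' You have made explicit precisely which legality clause drives each inclusion (the $\supseteq$ condition on arbitrary views for $\A^\B \subseteq \A_\V^{\B_\V}$, and the base-view $\subseteq$ condition for the converse), which is all there is to it.
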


In fact, all structural decomposition methods define some way to build the views to be exploited for solving the given CSP instance. In our
framework, we associate with any decomposition method {\tt DM} a pair of polynomial-time computable functions $\lDM$ and $\rDM$ that, given any
CSP instance $(\A,\B)$, compute the pair $(\A_\V,\B_\V)$, where $\A_\V=\lDM(\A)$ is a $v$-structure, and $\B_\V=\rDM(\A,\B)$ is a legal
$r$-structure.\footnote{A natural extension of this notion we may be to consider $\FPT$ decomposition methods, where functions $\lDM$ and
$\rDM$ are computable in fixed-parameter polynomial-time. For the sake of presentation and of space, we do not consider $\FPT$ decomposition
methods in this paper, but our results can be extended to them rather easily.}

For instance, for any fixed natural number $k$, the \emph{generalized hypertree decomposition method}~\cite{gott-etal-99} (short: $hw_k$) is
associated with the functions $\it \ell\mbox{-}hw_k$ and $\it r\mbox{-}hw_k$ that, given a CSP instance $(\A,\B)$, build the pair $({\it
\ell\mbox{-}hw_k(\A)},{\it r\mbox{-}hw_k(\A,\B)})$ where, for each subset $\mathcal{C}$ of at most $k$ constraints from $(\A,\B)$, there is a
view $R_{\mathcal{C}}$ such that: (1) $\vars(R_{\mathcal{C}}^{\it \ell\mbox{-}hw_k(\A)})$ is the set of all variables occurring in
$\mathcal{C}$, and (2) the tuples in $R_{\mathcal{C}}^{\it r\mbox{-}hw_k(\A,\B)}$ are the solutions of the subproblem encoded by $\mathcal{C}$.
Similarly, the \emph{tree decomposition method}~\cite{RS84} ($tw_k$) is defined as above, but we consider each subset of at most $k$ variables
in $\A$ instead of each subset of at most $k$ constraints.

\begin{figure*}[t]
  \centering
  \includegraphics[width=\textwidth]{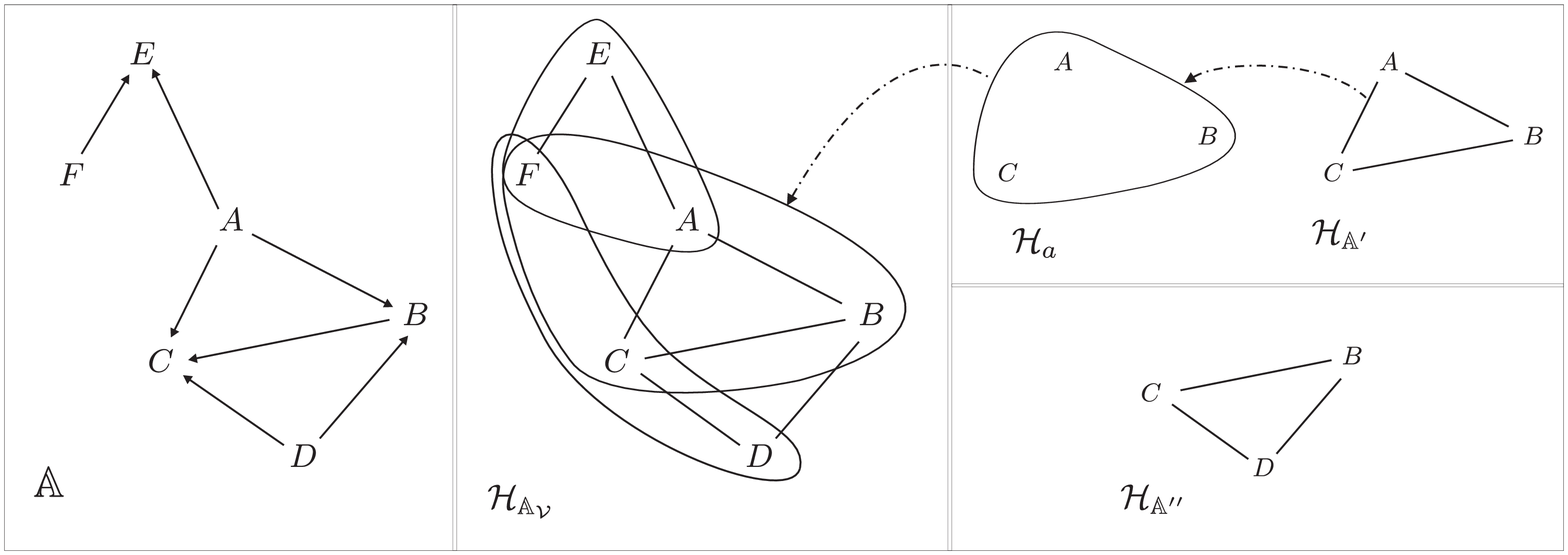}\vspace{-3mm}
  \caption{A structure $\A$. A hypergraph $\HG_{\A_\V}$ such that $(\HG_{\A},\HG_{\A_\V})$ has no tree
  projections. Two hypergraphs $\HG_{\A'}$ and $\HG_{\A''}$, where $\A'$ and $\A''$ are cores of
  $\A$. A tree projection $\HG_a$ of $(\HG_{\A'},\HG_{\A_V})$.
  }\label{fig:triangle}\vspace{-2mm}
\end{figure*}

\subsection{Tree Projections for CSP Instances}

In this paper we are interested in restrictions imposed on left-hand structures of CSP instances, based on some decomposition method {\tt DM}.
To this end, we associate with any $\ell$-structure $\A$ a hypergraph $\HG_\A=(A,H)$, whose set of nodes is equal to the set of variables $A$
and where, for each constraint scope in $R^\A$, the set $H$ of hyperedges contains a hyperedge including all its variables (no further
hyperedge is in $H$).
In particular, the $v$-structure $\A_\V=\lDM(\A)$ is associated with a hypergraph $\HG_{\A_\V}=(A,H)$, whose set of nodes is the set of
variables $A$ and where, for each view $R\in \tau_\V$, the set $H$ contains the hyperedge $\vars(R^{\A_\V})$.
In the following, for any hypergraph $\HG$, we denote its nodes and edges by $\nodes(\HG)$ and $\edges(\HG)$, respectively.

\begin{example}\label{ex:main}
Consider the $\ell$-structure $\A$ whose vocabulary just contains the binary relation symbol $\it R$, and such that
$\mathit{R}^\A=\{\tuple{F,E},$ $\tuple{A,E},$ $\tuple{A,C},$ $\tuple{A,B},$ $\tuple{B,C},$ $\tuple{D,B},$ $\tuple{D,C}\}$. Such a simple
one-binary-relation structure may be easily represented by the directed graph in the left part of Figure~\ref{fig:triangle}, where edge
orientation reflects the position of the variables in $R$. In this example, the associated hypergraph $\HG_{\A}$ is just the undirected version
of this graph.
Let {\tt DM} be a method that, on input $\A$, builds the $v$-structure $\A_\V=\lDM(\A)$ consisting of the seven
base views of the form $R_t$, for each tuple $t\in R^\A$, plus the three relations $R_1$, $R_2$, and $R_3$ such
that $R_1^{\A_\V}=\{\tuple{A,E,F}\}$, $R_2^{\A_\V}=\{\tuple{A,B,C,F}\}$, and $R_3^{\A_\V}=\{\tuple{C,D,F}\}$.
Figure~\ref{fig:triangle} also reports $\HG_{\A_\V}$. \hfill $\lhd$
\end{example}

A hypergraph $\HG$ is {\em acyclic} iff it has a {\em join tree}~\cite{bern-good-81}, i.e., a tree $\JT(\HG)$, whose vertices are the
hyperedges of $\HG$, such that if a node $X$ occurs in two hyperedges $h_1$ and $h_2$ of $\HG$, then $h_1$ and $h_2$ are connected in
$\JT(\HG)$, and $X$ occurs in each vertex on the unique path linking $h_1$ and $h_2$ in $\JT(\HG)$.

For two hypergraphs $\HG_1$ and $\HG_2$, we write $\HG_1\leq \HG_2$ iff each hyperedge of $\HG_1$ is contained in at least one hyperedge of
$\HG_2$. Let $\HG_1\leq \HG_2$. Then, a \emph{tree projection} of $\HG_1$ with respect to $\HG_2$ is an acyclic hypergraph $\HG_a$ such that
$\HG_1\leq \HG_a \leq \HG_2$. Whenever such a hypergraph $\HG_a$ exists, we say that the pair $(\HG_1,\HG_2)$ has a tree projection (also, we
say that $\HG_1$ has a tree projection w.r.t. $\HG_2$). The problem of deciding whether a pair of hypergraphs has a tree projection is called
the \emph{tree projection problem}, and it has recently been proven to be $\NP$-complete~\cite{GMS07}.

\begin{example}\label{ex:main2} Consider again the setting of Example~\ref{ex:main}.
It is immediate to check that the pair of hypergraphs $(\HG_\A,\HG_{\A_\V})$ does not have any tree projection. Consider instead the
(hyper)graph $\HG_{\A'}$ reported on the right of Figure~\ref{fig:triangle}. The acyclic hypergraph $\HG_a$ is a tree projection of $\HG_{\A'}$
w.r.t. $\HG_{\A_\V}$. In particular, note that the hyperedge $\{A,B,C\}\in\edges(\HG_a)$ ``absorbs'' the cycle in $\HG_{\A'}$, and that
$\{A,B,C\}$ is in its turn contained in the hyperedge $\{A,B,C,F\}\in\edges(\HG_{\A_\V})$. \hfill $\lhd$
\end{example}

Note that all the (known) structural decomposition methods can be recast as special cases of tree projections, since they just differ in how
they define the set of views to be built for evaluating the CSP instance. For instance, a hypergraph $\HG_\A$  has generalized hypertree width
(resp., treewidth) at most $k$ if and only if there is a tree projection of $\HG_\A$ w.r.t. $\HG_{\it \ell\mbox{-}hw_k(\A)}$ (resp., w.r.t.
$\HG_{\it \ell\mbox{-}tw_k(\A)}$).

However, the setting of tree projections is more general than such traditional decomposition approaches, as it allows us to consider arbitrary
sets of views, which often require more care and different techniques. As an example, we shall illustrate below that in the setting of tree
projections it does not make sense to talk about ``the'' core of an $\ell$-structure, because different isomorphic cores may differently behave
with respect to the available views. This phenomenon does not occur, e.g., for generalized hypertree decompositions, where all combinations of
$k$ constraints are available as views.

\begin{example}\label{ex:main3}
Consider the structure $\A$ illustrated in Example~\ref{ex:main}, and the structures $\A'$ and $\A''$ over the same vocabulary as $\A$, and
such that $R^{\A'}=\{\tuple{A,C},$ $\tuple{A,B},$ $\tuple{B,C}\}$ and $R^{\A''}=\{\tuple{B,C},$ $\tuple{D,B},$ $\tuple{D,C}\}$. The hypergraphs
$\HG_{\A'}$ and $\HG_{\A''}$ are reported in Figure~\ref{fig:triangle}.
Note that $\A'$ and $\A''$ are two (isomorphic) cores of $\A$, but they have completely different structural properties. Indeed,
$(\HG_{\A'},\HG_{\A_\V})$ admits a tree projection (recall Example~\ref{ex:main2}), while $(\HG_{\A''},\HG_{\A_\V})$ does not. \hfill $\lhd$
\end{example}

\subsection{CSP Instances and tp-Coverings}

We complete the picture of our unifying framework to deal with decomposition methods for constraint satisfaction problems, by illustrating some
recent results in \cite{GS10}, which will be useful to our ends. Let us start by stating some preliminary definitions.

For a set of variables $O=\{X_1,\ldots,X_r\}$, let $\mathbb{S}_{O}$ denote the structure with a fresh $r$-ary relation symbol $R_{O}$ and
domain $O$, such that $R_{O}^{\A_{O}}=\{\tuple{X_1,\ldots,X_r}\}$.

\begin{definition}\label{def:tpcovered}\em
Let $\A_\V$ be a $v$-structure. A set of variables $O\subseteq A$ is $\tpCovered$ in $\A_\V$ if there exists a core $\A'$ of
$\A\uplus\mathbb{S}_{O}$ such that $(\HG_{\A'},\HG_{\A_\V})$ has a tree projection.\footnote{For the sake of completeness, note that we use
here a core $\A'$ because we found it more convenient for the presentation and the proofs. However, it is straightforward to check that this
notion can be equivalently stated in terms of any structure homomorphically equivalent to $\A\uplus\mathbb{S}_{O}$. The same holds for the
related Definition~\ref{def:tpcovered-new}.}\hfill $\Box$
\end{definition}

For instance, it is easily seen that the variables $\{A,B,C\}$ are $\tpCovered$ in the $v$-structure $\A_\V$ discussed in
Example~\ref{ex:main}. In particular, note that the structure $\A\uplus\mathbb{S}_{\{A,B,C\}}$ is associated with the same hypergraph $\HG_{\A'}$ that has a
tree projection w.r.t. $\HG_{\A_\V}$ (cf. Example~\ref{ex:main3}). Instead,  the variables $\{B,C,D\}$ are not $\tpCovered$ in $\A_\V$.

Given a CSP instance $(\A_\V,\B_\V)$, we denote by  $\GAC(\A_\V,\B_\V)$ the $r$-structure that is obtained by enforcing generalized arc
consistency on $(\A_\V,\B_\V)$.

The following result, proved in~\cite{GS10} for a different setting, states the precise relationship between generalized-arc-consistent views
and $\tpCovered$ sets of variables.

\begin{theorem}\label{thm:pods}
Let $\A$ be an $\ell$-structure, and let $\A_\V$ be a $v$-structure. The following are equivalent:

\vspace{-1mm}
\begin{enumerate}
\item[(1)] A set of variables $O\subseteq A$ is $\tpCovered$ in $\A_\V$;

\item[(2)] For every $r$-structure $\B$, for every $r$-structure $\B_\V$ that is legal w.r.t. $\A_\V$ and $(\A,\B)$, and for every relation $R\in\tau_\V$ with $O\subseteq \vars(R^{\A_\V})$, $R^{\GAC(\A_\V,\B_\V)}[O]=\A^\B[O]$.
\end{enumerate}
\end{theorem}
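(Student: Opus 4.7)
I would split the proof of Theorem~\ref{thm:pods} into the two implications; the easy direction is $(1)\Rightarrow(2)$, and the hard direction is $(2)\Rightarrow(1)$.

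For $(1)\Rightarrow(2)$, assume $O$ is $\tpCovered$ in $\A_\V$, witnessed by a core $\A'$ of $\A\uplus\mathbb{S}_O$ and a tree projection $\HG_a$ of $(\HG_{\A'},\HG_{\A_\V})$. Because $\mathbb{S}_O$ contributes the single-tuple relation $R_O^{\A\uplus\mathbb{S}_O}=\{\tuple{X_1,\ldots,X_r}\}$, any homomorphism from $\A\uplus\mathbb{S}_O$ to its substructure $\A'$ is forced to act as the identity on $O$; in particular $O\subseteq A'$ and there is a retraction $\rho:\A\to\A'$ fixing $O$ pointwise. Pick any view $R\in\tau_\V$ with $O\subseteq\vars(R^{\A_\V})$ and any tuple $s\in R^{\GAC(\A_\V,\B_\V)}$; the inclusion $\A^\B[O]\subseteq R^{\GAC(\A_\V,\B_\V)}[O]$ is immediate from legality of $\B_\V$ and the fact that GAC never discards a tuple that extends to a complete solution, so it suffices to show $s[O]\in\A^\B[O]$. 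Since $\HG_{\A'}\leq\HG_a\leq\HG_{\A_\V}$, every edge of the acyclic hypergraph $\HG_a$ sits inside some view of $\A_\V$, so a standard Yannakakis/Beeri-Fagin-style top-down walk along a join tree of $\HG_a$, repeatedly invoking GAC-consistency between the current view and the next view of $\A_\V$ covering the next hyperedge of $\HG_a$, extends $s[\vars(R)\cap A']$ to a homomorphism $h':\A'\to\B$. Composing with $\rho$ yields $h=h'\circ\rho\in\A^\B$ with $h[O]=s[O]$, as required.

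For $(2)\Rightarrow(1)$, I would argue by contraposition: assuming that for \emph{every} core $\A'$ of $\A\uplus\mathbb{S}_O$ the pair $(\HG_{\A'},\HG_{\A_\V})$ has no tree projection, I would construct a right-hand structure $\B$ together with the canonical minimal legal $\B_\V$ defined by $R^{\B_\V}:=\A^\B[\vars(R^{\A_\V})]$ for each $R\in\tau_\V$, and exhibit a view $R$ with $O\subseteq\vars(R^{\A_\V})$ for which GAC produces a spurious tuple. The natural choice is $\B$ equal to a core of $\A\uplus\mathbb{S}_O$ (viewed over $\tau$), so that $\A^\B[O]$ is precisely the set of retractions of $O$ into that core. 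The counterexample is then obtained by invoking the game-theoretic characterization of the tree projection problem used in \cite{GS10}: the non-existence of a tree projection for any core of $\A\uplus\mathbb{S}_O$ corresponds to the Duplicator having a winning strategy in an existential pebble game on $(\HG_\A,\HG_{\A_\V})$ anchored to cover $O$. This strategy yields, view-by-view, a family of locally consistent partial assignments witnessing that a certain tuple over $O$ survives GAC in some view containing $O$ while no homomorphism $\A\to\B$ realizes it, so $R^{\GAC(\A_\V,\B_\V)}[O]\supsetneq\A^\B[O]$.

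The main obstacle is this last step. The subtle point is that, in the general tree-projection framework, ``the core'' of an $\ell$-structure is not well-defined relative to $\A_\V$ (as Example~\ref{ex:main3} already highlights), so the counterexample construction must quantify over all cores of $\A\uplus\mathbb{S}_O$ and produce a single pair $(\B,\B_\V)$ that defeats GAC regardless of which core one looks at. Building such a uniform witness is precisely what the pebble-game machinery of \cite{GS10} achieves, and importing it cleanly—while guaranteeing that the resulting $\B_\V$ satisfies both legality conditions (containing $\A^\B[w]$ on each view, but remaining at least as restrictive as the base views)—is the delicate technical core of the argument.
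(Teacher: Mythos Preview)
The paper does not contain a proof of Theorem~\ref{thm:pods}: it is explicitly imported from~\cite{GS10} (``proved in~\cite{GS10} for a different setting'') and stated without argument. So there is no in-paper proof to compare your proposal against; you are effectively sketching what~\cite{GS10} does.

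Your outline for $(1)\Rightarrow(2)$ is the right one. One point you pass over is why the assignment $h'$ you build along the join tree of $\HG_a$ is a homomorphism $\A'\to\B$ and not merely consistent with the (possibly over-full) views in $\B_\V$: this uses that every constraint scope in $\A'$ sits inside some hyperedge of $\HG_a$, hence inside some view, and after $\GAC$ that view's projection onto any base view's variables is contained in the base view, which by legality is contained in the original constraint relation. You should also say why the starting tuple $s\in R^{\GAC(\A_\V,\B_\V)}$ can be transported to a tuple over some hyperedge $h_0\in\edges(\HG_a)$ with $O\subseteq h_0$; this comes from the pairwise-consistency meaning of $\GAC$ in this paper and the fact that $O$ itself is a hyperedge of $\HG_{\A'}$ (via $R_O$), hence covered by $\HG_a$.

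For $(2)\Rightarrow(1)$ you have correctly located the difficulty (no canonical core relative to $\A_\V$, so the counterexample must defeat all cores simultaneously) and the right tool (the game-theoretic characterization of tree projections from~\cite{GS10}). Your concrete suggestion ``take $\B$ equal to a core of $\A\uplus\mathbb{S}_O$ viewed over $\tau$'' is, however, only a heuristic and not obviously sufficient: with that choice $\A^\B[O]$ is nonempty, and you have not explained where the spurious tuple over $O$ comes from, nor why the \emph{minimal} legal $\B_\V$ you define (with $R^{\B_\V}=\A^\B[\vars(R^{\A_\V})]$) would retain it after $\GAC$. The actual construction in~\cite{GS10} builds $\B$ and $\B_\V$ from the combinatorial obstruction (the Robber's winning strategy), not simply by picking a core; that is where the ``spurious but locally consistent'' tuple is manufactured. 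You have named the ingredient but not carried out the construction, which---as you yourself note---is the technical core.
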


Note that the result answered a long standing open question \cite{GS84,SS93} about the relationship between the existence of tree projections
and (local and global) consistency properties in databases~\cite{GS10}.
In words, the result states that just enforcing generalized arc consistency on the available views is a sound and complete procedure to solve
ECSP instances \emph{if, and only if,} we are interested in (projected) solutions over output variables that are $\tpCovered$ and occur together
in some available view.
Thus, in these cases, all solutions can be computed in polynomial time.
The more general case where output variables are arbitrary (i.e., not necessarily included in some available view) is explored in the rest of
this paper.

We now leave the section by noticing that as a consequence of Theorem~\ref{thm:pods}, we can characterize the
power of local-consistency for any decomposition method {\tt DM} such that, for each pair $(\A,\B)$, each view
in $\B_\V=\rDM(\A,\B)$ contains the solutions of the subproblem encoded by the constraints over which it is
defined. For the sake of simplicity, we state below the result specialized to the well-known decomposition
methods $tw_k$ and $hw_k$.
\smallskip

%\begin{theorem}\label{thm:pods2}[bis]
\noindent \textbf{Theorem 2-bis.} {\em Let {\tt DM} be a decomposition method in $\{tw_k,hw_k\}$, let $\A$ be an $\ell$-structure, and let
$\A_\V=\lDM(\A)$. The following are equivalent:

\vspace{-1mm}
\begin{enumerate}
\item[(1)] A set of variables $O\subseteq A$ is $\tpCovered$ in $\A_\V$;

\item[(2)] For every $r$-structure $\B$, and for every relation $R\in\tau_\V$ with $O\subseteq \vars(R^{\A_\V})$,
    $R^{\GAC(\A_\V,\B_\V)}[O]=\A^\B[O]$,  where $\B_\V=\rDM(\A,\B)$.
\end{enumerate}
}
\begin{proof}[Sketch]
Preliminarily, it is easy to see that (2) in Theorem~\ref{thm:pods} may be equivalently stated as follows:
\begin{enumerate}
\item[(2')] For every $r$-structure $\B$, for every $r$-structure $\B_\V$ that is legal w.r.t. $\A_\V$ and $(\A,\B)$ and such that
    $\B_\V=\GAC(\A_\V,\B_\V)$, and for every relation $R\in\tau_\V$ with $O\subseteq \vars(R^{\A_\V})$, $R^{\B_\V}[O]=\A^\B[O]$.
\end{enumerate}

The fact that $ (1) \Rightarrow (2)$ trivially follows from Theorem~\ref{thm:pods}. We have to show that $(2) \Rightarrow (1)$ holds as well.
To this end, observe that if $O$ is not $\tpCovered$ in $\lDM(\A)$, by Theorem~\ref{thm:pods} (actually, $(1) \Rightarrow(2')$), we can
conclude the existence of: (1) an $r$-structure $\B$, (2) an $r$-structure $\B_\V$ that is legal w.r.t. $\A_\V$ and $(\A,\B)$ and such that
$\B_\V=\GAC(\A_\V,\B_\V)$, and (3) a relation $R\in\tau_\V$ with $O\subseteq \vars(R^{\A_\V})$ such that $R^{\B_\V}[O]\neq \A^\B[O]$ (of
course, $R^{\B_\V}[O]\supset \A^\B[O]$ by the legality of $\B_\V$).
Consider now the $r$-structure $\B_\V'={\it r\mbox{-}hw_k(\A,\B)}$. Recall that each view in $\B_\V'$ contains \emph{all} the solutions of the
subproblem encoded by the constraints over which it is defined. Since $\B_\V=\GAC(\A_\V,\B_\V)$, it can be shown that each view in $\B_\V$
contains \emph{only} solutions of the subproblem encoded by the constraints over which it is defined. Thus, for each relation $R\in\tau_\V$,
$R^{\B_\V'}\supseteq R^{\B_\V}$ holds, which implies $R^{\B_\V'}[O]\supset \A^\B[O]$. The same line of reasoning applies to the tree
decomposition method.\hfill $\lhd$
\end{proof}

Note that if we consider decision problem instances ($O=\emptyset$) and the treewidth method ($\A_\V={\it \ell\mbox{-}tw_k(\A)}$), from Theorem
2-bis, we (re-)obtain the nice characterization of~\cite{ABD07} about the relationship between $k$-local consistency and treewidth modulo
homomorphic equivalence. If we consider generalized hypertree-width ($\A_\V={\it \ell\mbox{-}hw_k(\A)}$), we get the answer to the
corresponding open question for the unbounded arity case, that is, the precise power of the procedure enforcing $k$-union (of constraints)
consistency (i.e., the power of the algorithm for the decision problem described in~\cite{CD05}).

\section{Enumerating Solutions of Output-Aware CSP Instances}

The goal of this section is to study the problem of enumerating CSP solutions for classes of instances where possible output variables are part
of the structure of the given instance. This is formalized by assuming that the relational structure contains {\em domain constraints}  that
specify the domains for such variables.

\begin{definition}\em
A variable $X\in A$ is {\em domain restricted} in the $\ell$-structure $\A$ if there exists a unary distinguished (domain) relation symbol
$\dom{(X)}\in \tau$ such that $\{\tuple{X}\}=\dom{(X)}^\A$. The set of all domain restricted variables is denoted by $\DRV(\A)$. \hfill $\Box$
\end{definition}

We say that an ECSP instance $(\A,\B,O)$ is domain restricted if $O\subseteq\DRV(\A)$. Of course, if it is not, then one may easily build in
linear time an equivalent domain-restricted ECSP instance where an additional fresh unary constraint is added for every output variable, whose
values are taken from any constraint relation where that variable occurs. We say that such an instance is a domain-restricted version of
$(\A,\B,O)$.

Figure~\ref{fig:algoritmo} shows an algorithm, named \alg, that computes the solutions of a given ECSP instance. The algorithm is parametric
w.r.t. any chosen decomposition method {\tt DM}, and works as follows.
Firstly, \alg\ starts by transforming the instance $(\A,\B,O)$ into a domain restricted one, and by constructing the views in $(\A_\V,\B_\V)$
via {\tt DM}. Then, it invokes the procedure ${\tt Propagate}$.
This procedure backtracks over the output variables $\{X_1,\ldots, X_m\}$: At each step $i$, it tries to assign
a value to $X_i$ from its domain view,\footnote{With an abuse of notation, in the algorithm we denote by
$\dom(X)$ the base view in $\tau_\V$ associated with the input constraint $\dom(X)\in\tau$ (in fact, no
confusion may arise because the algorithm only works on views).} and defines this value as the unique one
available in that domain, in order to ``propagate'' such an assignment over all other views. This is
accomplished by enforcing generalized arc-consistency each time the procedure is invoked. Eventually, whenever
an assignment is computed for all the variables in $O$, this solution is returned in output, and the algorithm
proceeds by backtracking again trying different values.

\begin{figure}[t]
\centering \fbox{
\parbox{0.68\textwidth}{
\begin{tabular}{l}
  \textbf{Input}: An ECSP instance $(\A,\B,O)$, where $O=\{X_1,\ldots,X_m\}$;\\
%  \ \ \ \ \ \ \ \ \ \ \ \  a decomposition method {\tt DM}; \\ %n $\ell$-structure $\A_\V$, and a legal $r$-structure $\B_\V$;\\
  \textbf{Output}: $\A^\B[O]$;\\
  \textbf{Method}: update $(\A,\B,O)$ with any of its domain-restricted versions;\\
  \ \ \ \ \ \ \ \ \ \ \ \ \ \ \ \ let $\A_\V:=\lDM(\A)$, \ \ $\B_\V:=\rDM(\A,\B)$;\\
  \ \ \ \ \ \ \ \ \ \ \ \ \ \ \ \ invoke ${\tt Propagate}$$(1,(\A_\V,\B_\V),m,\tuple{})$; \\
%  \hline\\
%  \vspace{-6.5mm}\\
%  \emph{1.}\ \ \ \textbf{let} $\V_1$ be a copy of $\V$;   \\
%  \emph{2.}\ \ \ \textbf{Propagate}$(\V_1,m,\tuple{})$; \\
  \hline\\
  \vspace{-7.4mm}\\
  \hline\\
  \vspace{-7mm}\\
  \textbf{Procedure} ${\tt Propagate}$($i$: integer, $(\A_\V,\B_\V)$: pair of structures, $m$: integer,\\
  \hspace{29mm}      \ $\tuple{a_1,...,a_{i-1}}$: tuple of values in $A^i$);    \\
  \textbf{begin}\\
  \textsc{1.}\ \ \ let $\B'_\V:=\GAC(\A_\V,\B_\V)$;  \\
  \textsc{2.}\ \ \ let ${\it activeValues}:=\dom{(X_i)}^{\B'_\V}$;  \\
  \textsc{3.}\ \ \ \textbf{for each} element $\tuple{a_i}\in {\it activeValues}$ \textbf{do}\\
  \textsc{4.}\ \ \ $\mid$ \ \ \ \ \textbf{if} $i=m$ \textbf{then}\\
  \textsc{5.}\ \ \ $\mid$ \ \ \ \ $\mid$ \ \ \ \textbf{output} $\tuple{a_1,...,a_{m-1},a_m}$; \\
  \textsc{6.}\ \ \ $\mid$ \ \ \ \ \textbf{else}\\
  \textsc{7.}\ \ \ $\mid$ \ \ \ \ $\mid$ \ \ \ update $\dom{(X_i)}^{\B'_\V}$ with $\{\tuple{a_i}\}$;
            \quad \quad {\em /$\ast$ $X_i$ is fixed to value $a_i$ }$\ast$/\\
  \textsc{8.}\ \ \ \hspace{-0.2mm}$\lfloor$\hspace{0.4mm} \ \ \ $\lfloor$ \ \ \ ${\tt Propagate}$$(i+1,(\A_\V,\B'_\V),m,\tuple{a_1,...,a_{i-1},a_i})$; \\
  \textbf{end.}\\
\end{tabular}
}}\vspace{-2mm}
  \caption{\textbf{Algorithm} \alg.} \label{fig:algoritmo}\vspace{-3mm}
\end{figure}

\subsection{Tight Characterizations for the Correctness of \alg}

To characterize the correctness of \alg, we need to define a structural property that is related to the one stated in
Definition~\ref{def:tpcovered}. Below, differently from Definition~\ref{def:tpcovered} where the set of output variables $O$ is treated as a
whole, each variable in $O$ has to be tp-covered as a singleton set.

\begin{definition}\label{def:tpcovered-new}\em
Let $(\A,\B,O)$ be an ECSP instance. We say that $(\A,O)$ is $\tpCovered$ through {\tt DM} if there is a core $\A'$ of $\A\uplus\biguplus_{X\in
O} \mathbb{S}_{\{X\}}$ such that $(\HG_{\A'},\HG_{\small \lDM(\A)})$ has a tree projection.~\hfill~$\Box$
\end{definition}

Note that the above definition is purely structural, because (the right-hand structure) $\B$ plays no role there.
In fact, we next show that this definition captures classes of instances where \alg\ is correct.

\begin{theorem}\label{thm:correct}
Let {\tt DM} be a decomposition method, let $\A$ be an $\ell$-structure, and let $O\subseteq A$ be a set of variables. Assume that $(\A,O)$ is
$\tpCovered$ through {\tt DM}. Then, for every $r$-structure $\B$, \alg\ computes the set $\A^\B[O]$.
\end{theorem}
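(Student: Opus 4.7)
The plan is to prove correctness by induction on the recursion depth of ${\tt Propagate}$, maintaining the following invariant: at each call ${\tt Propagate}(i, (\A_\V, \B_\V), m, \tuple{a_1, \ldots, a_{i-1}})$, the view structure $\B_\V$ is legal with respect to $\A_\V$ and the modified instance $(\A, \B^{(i-1)})$, where $\B^{(i-1)}$ is obtained from $\B$ by tightening each domain relation $\dom(X_j)^\B$ to $\{\tuple{a_j}\}$ for $j < i$. Granted this invariant and the fact that each $X \in O$ is domain-restricted (so $\{X\} \subseteq \vars(\dom(X)^{\A_\V})$), a single application of Theorem~\ref{thm:pods} to the singleton $\{X_i\}$ yields $\dom(X_i)^{\GAC(\A_\V, \B_\V)}[\{X_i\}] = \A^{\B^{(i-1)}}[\{X_i\}]$. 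Hence the set ${\it activeValues}$ computed at line~2 is precisely the set of values extending $\tuple{a_1, \ldots, a_{i-1}}$ to some solution of $(\A, \B)$, from which both soundness and completeness follow by a routine induction on $i$.

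To make the above work I must verify two preliminary facts. First, that tp-coveredness of $(\A, O)$ in the sense of Definition~\ref{def:tpcovered-new} entails tp-coveredness of each singleton $\{X\}$, with $X \in O$, in $\A_\V$ in the sense of Definition~\ref{def:tpcovered}. Let $\A^*$ be a core of $\A \uplus \biguplus_{Y \in O} \mathbb{S}_{\{Y\}}$ witnessing tp-coveredness, so that $(\HG_{\A^*}, \HG_{\A_\V})$ admits a tree projection $\HG_a$. Since each $\mathbb{S}_{\{Y\}}$ contains only the tuple $\tuple{Y}$, every homomorphism from $\A \uplus \biguplus_{Y \in O} \mathbb{S}_{\{Y\}}$ into $\A^*$ must fix each $Y \in O$ pointwise; hence $A^*$ contains all of $O$, and $\HG_{\A^*}$ contains the singleton edge $\{Y\}$ for each $Y \in O$. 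Let $\widehat{\A}$ be $\A^*$ with the unary relations $R_{\{Y\}}$ for $Y \neq X$ discarded. Using this pointwise-fixing property it is immediate to verify that $\widehat{\A}$ is homomorphically equivalent to $\A \uplus \mathbb{S}_{\{X\}}$. Since $\HG_{\widehat{\A}}$ is obtained from $\HG_{\A^*}$ by removing at most some singleton hyperedges, we have $\HG_{\widehat{\A}} \leq \HG_{\A^*} \leq \HG_a \leq \HG_{\A_\V}$, so $\HG_a$ remains a tree projection of $\HG_{\widehat{\A}}$ w.r.t. $\HG_{\A_\V}$; the footnote to Definition~\ref{def:tpcovered} then yields that $\{X\}$ is tp-covered in $\A_\V$. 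Second, that the invariant is preserved along the recursion: the initial $\B_\V = \rDM(\A, \B)$ is legal by construction; the operation $\B'_\V := \GAC(\A_\V, \B_\V)$ preserves legality, because GAC only removes tuples that are inconsistent with every extension in the view-CSP, and hence inconsistent with every solution of $(\A, \B^{(i-1)})$; and overwriting $\dom(X_i)^{\B'_\V}$ with $\{\tuple{a_i}\}$ exactly mirrors the tightening of $\dom(X_i)^\B$ to $\{\tuple{a_i}\}$ in passing from $\B^{(i-1)}$ to $\B^{(i)}$, leaving the remaining legality conditions undisturbed.

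The main technical obstacle I anticipate is the first preliminary fact above. Example~\ref{ex:main3} is a cautionary reminder that two homomorphically equivalent structures---indeed, two isomorphic cores of the same structure---can have radically different tree projection behavior relative to a fixed set of views, so one cannot blindly ``transfer'' a tp-covered witness for $O$ to one for a subset. The argument sketched above circumvents this by exploiting the freedom granted by the footnote to Definition~\ref{def:tpcovered}: rather than seeking a fresh core of $\A \uplus \mathbb{S}_{\{X\}}$, we recycle the given $\A^*$ and merely drop the irrelevant unary symbols, thereby keeping the very same hypergraph and the very same tree projection.
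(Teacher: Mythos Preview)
Your proposal is correct and follows essentially the same approach as the paper's proof sketch: reduce to Theorem~\ref{thm:pods} applied to each singleton $\{X_i\}$, maintain legality of the view $r$-structure as an invariant across recursive calls (with the right-hand structure progressively tightened by the fixed domain values), and read off ${\it activeValues}=\A^{\B^{(i-1)}}[\{X_i\}]$ at each step. Your treatment of the first preliminary fact---carving $\widehat{\A}$ out of the witnessing core $\A^*$ and invoking the footnote to Definition~\ref{def:tpcovered}---is more explicit than the paper, which simply asserts ``This entails that, $\forall X\in O$, $\{X\}$ is $\tpCovered$''; but the underlying mechanism is the same.
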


\begin{proof}[Sketch]
Let $\B_{in}$ be any $r$-structure. Preliminarily observe that if the original input instance, say $I_{in}=(\A_{in},\B_{in},O)$, is
$\tpCovered$ through {\tt DM}, the same property is enjoyed by its equivalent domain-restricted version, say $I_0=(\A,\B_0,O)$, computed in the
starting phase of the algorithm. Thus, there is a core $\A'$ of $\A\uplus\biguplus_{X\in O} \mathbb{S}_{\{X\}}$ such that
$(\HG_{\A'},\HG_{\A_\V})$ has a tree projection, where $\A_\V=\lDM(\A)$. This entails that, $\forall X\in O$, $\{X\}$ is $\tpCovered$ in
$\HG_{\A_\V}$.
It is sufficient to show that, if $\A^\B_0\neq\emptyset$, at the generic call of ${\tt Propagate}$ with $i$ as its first argument, ${\it active
Values}$ is initialized at Step~2 with a non-empty set that contains all those values that $X_i\in O$ may take, in any solution of $(\A,\B_0)$
extending the current partial solution $\tuple{a_1,...,a_{i-1}}$; otherwise ($\A^{\B_0}=\emptyset$), ${\it active Values}=\emptyset$, and the
algorithm correctly terminates with an empty output without ever entering the {\bf for} cycle.
For the sake of presentation, we just prove what happens in the first call.% (for some $m=|O|>1$).
The generalization to the generic case is then straightforward.

Let $i=1$ and assume that $\B'_\V:=\GAC(\A_\V,\B_\V)$ has been computed. From the $\tpCovered$ property of
variables in $O$ and Theorem~\ref{thm:pods}, it follows that $\forall X\in O$, its domain view $\dom(X)$ is such
that $\dom(X)^{\B'_\V}=\A^{\B_0}[\{X\}]$. Thus, all values in the domain views associated with output variables
occur in some solutions. This holds in particular for $\dom(X_1)$ that is empty if, and only if,
$\A^{\B_0}=\emptyset$, in which case the cycle is skipped and the algorithm immediately halts with an empty
output. Assume now that this is not the case, so that ${\it active Values}= \A^{\B_0}[\{X_1\}]\neq\emptyset$,
and let $a_1$ be the chosen value at Step~3. Consider a new instance $I_1=(\A,\B_1,O)$ where the domain
constraint for $X_1$ contains the one value $a_1$. From the above discussion it follows that
$\A^{\B_1}\neq\emptyset$, and clearly the solutions of $I_1$ are all and only those of $I_0$ that extend the
partial solution $\tuple{a_1}$. Moreover, it is easy to check that the $r$-structure ${\B'_\V}$ obtained after
the execution of Step~7 is legal w.r.t. $(\A,\B_1)$, and recall that nothing is changed in the pair $(\A,O)$,
which is (still) $\tpCovered$ through {\tt DM}. Therefore, when we call recursively call ${\tt Propagate}$ at
Step~8 with $i=2$, we are in the same situation as in the first call, but going to enumerate the solutions of
$I_1$. At the end of this call, we just repeat this procedure with the next available value for $X_1$, say
$a_2$, until all elements in ${\it active Values}=\A^\B_0[\{X_1\}]$ have been considered (and
propagated).~\hfill~$\lhd$
\end{proof}

We now complete the picture by observing that Definition~\ref{def:tpcovered-new} also provides the necessary
conditions for the correctness of \alg. As in Theorem~2-bis, we state below the result specialized to the
methods $tw_k$ and $hw_k$.

\begin{theorem}\label{thm:correct2}
Let {\tt DM} be a decomposition method in $\{tw_k,hw_k\}$, let $\A$ be an $\ell$-structure, and let $O\subseteq A$ be a set of variables.
Assume that, for every $r$-structure $\B$, \alg\ computes $\A^\B[O]$. Then, $(\A,O)$ is $\tpCovered$ through {\tt DM}.
\end{theorem}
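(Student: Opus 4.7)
The plan is to argue by contrapositive: assume that $(\A,O)$ is not $\tpCovered$ through $\mathtt{DM}$ and exhibit an $r$-structure $\B$ on which $\mathtt{ComputeAllSolutions}_{\mathtt{DM}}$ fails to enumerate $\A^\B[O]$. Note first that, since $\rDM$ always produces a legal view structure with respect to $(\A,\B)$ and legality is preserved by $\GAC$, the algorithm can never miss a real solution; the only way it can err is to output a spurious tuple $\langle a_1,\dots,a_m\rangle\notin\A^\B[O]$, and we will locate the failure at the step where such a tuple is produced.

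The first step reduces algorithmic correctness to a singleton structural property. Suppose \alg\ computes $\A^\B[O]$ on every $r$-structure $\B$. Observe that any partial fixing $\langle a_1,\dots,a_{i-1}\rangle$ visited by the recursion can be absorbed into the input $r$-structure by installing additional unary constraints $\dom(X_j)^{\B^*}=\{a_j\}$ for $j<i$, producing a new $r$-structure $\B^*$ for which the residual problem coincides with what the algorithm explores below the current call. Together with the fact (argued as in Theorem~2-bis) that $\GAC$ on the lazily updated view structure agrees with $\GAC$ on $\rDM(\A,\B^*)$ at the fixed point, the uniform correctness of \alg\ forces $\dom(X)^{\GAC(\A_\V,\rDM(\A,\B))}=\A^\B[\{X\}]$ for every $\B$ and every $X\in O$. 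Applying Theorem~2-bis to the singleton $\{X\}$ and the view $\dom(X)$ (which trivially contains it) then yields that each $\{X\}$, $X\in O$, is $\tpCovered$ in $\A_\V=\lDM(\A)$.

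The crucial remaining step is to lift the per-singleton condition to the joint condition of Definition~\ref{def:tpcovered-new}. Here the restriction $\mathtt{DM}\in\{tw_k,hw_k\}$ enters essentially: $\HG_{\lDM(\A)}$ contains all subsets of variables (for $tw_k$), respectively the variable sets of all subsets of at most $k$ constraints (for $hw_k$), and in particular every ``small'' subset used in the argument is already available as a view. Using this richness, I would take the individual retractions $h_X\colon\A\uplus\mathbb{S}_{\{X\}}\to\A_X$ witnessing singleton tp-coverage and combine them into a single retraction of $\A\uplus\biguplus_{X\in O}\mathbb{S}_{\{X\}}$ fixing every $X\in O$, whose image $\A'$ is homomorphically equivalent to $\A\uplus\biguplus_{X\in O}\mathbb{S}_{\{X\}}$ and satisfies $(\HG_{\A'},\HG_{\A_\V})$ has a tree projection; by the footnote to Definition~\ref{def:tpcovered-new}, this is enough to conclude that $(\A,O)$ is $\tpCovered$ through $\mathtt{DM}$.

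The main technical obstacle I expect is exactly this amalgamation step: as already observed in the paper around Example~\ref{ex:main3}, different isomorphic cores can behave differently with respect to the available views in the general tree-projection framework, so the passage from per-singleton retractions $h_X$ to a single simultaneously-fixing retraction is delicate and must exploit the fact that, under $tw_k$ or $hw_k$, all subsets showing up in the glueing argument are themselves views. Once the combining is carried out, contrapositive reasoning (the assumption that $(\A,O)$ is not $\tpCovered$) forces the failure of at least one of the singleton conditions for some $X\in O$; Theorem~2-bis then produces a concrete $\B$ on which $\GAC$ returns a strict superset of $\A^{\B^*}[\{X\}]$ for the corresponding $\B^*$, and following the algorithm one more step from that spurious value produces the desired spurious output tuple.
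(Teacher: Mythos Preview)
Your proof has a genuine gap at the amalgamation step. You reduce correctness to the condition that each singleton $\{X\}$, $X\in O$, is $\tpCovered$ in $\A_\V$, and then propose to ``combine'' the individual retractions $h_X$ into a single retraction fixing all of $O$ pointwise whose image admits a tree projection. This implication---per-singleton tp-coverage implies joint tp-coverage---is false, even for $tw_k$. A concrete counterexample is furnished by the very construction the paper uses in Lemma~\ref{lem:grid}: take $\A$ to be an undirected grid (one symmetric binary relation) and $O$ its four corners. For any single corner $X$, the bipartition of the grid yields a retraction onto a single edge incident to $X$, so each $\{X\}$ is tp-covered through $tw_1$; yet with all four corners fixed the grid is rigid, so $(\A,O)$ is not tp-covered through $tw_k$ for any $k$ below the grid width. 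No amount of ``richness'' of the $tw_k$ or $hw_k$ view sets rescues the combination step, because the obstruction is homomorphism-theoretic (there is simply no small retract fixing all of $O$), not a matter of which views are available.

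The paper avoids this trap by never passing through the per-singleton condition. Instead it takes a \emph{maximal} subset $O'\subseteq O$ such that $(\A,O')$ is tp-covered, picks any $X\in O\setminus O'$, and works relative to the structure $\bar\A=\A\uplus\biguplus_{Y\in O'}\mathbb{S}_{\{Y\}}$: by maximality, $\{X\}$ is \emph{not} tp-covered in $\lDM(\bar\A)$, so Theorem~2-bis yields a right-hand structure $\B$ on which $\GAC$ is strictly loose at $X$ \emph{after} the variables in $O'$ have been (correctly) fixed according to some genuine solution. Following the algorithm along that branch produces a spurious output tuple. The idea you are missing is precisely this relativisation to a maximal already-covered prefix $O'$; the failure must be located at the boundary between $O'$ and $O'\cup\{X\}$, not at $X$ in isolation.
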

\begin{proof}[Sketch]
Assume that $(\A,O)$ is not $\tpCovered$ through {\tt DM}, and let $O'\subseteq O$ be a maximal set of output variables such that $(\A,O')$ is
$\tpCovered$ through {\tt DM}. In the case where $O'=\emptyset$, there is no core $\A'$ of $\A$ such that $(\HG_{\A'},\HG_{\lDM(\A)})$ has a
tree projection. Thus, we can apply Theorem~2-bis and conclude that there are an $r$-structure $\B$, and a relation $R\in\tau_\V$ such that
$\A^\B$ has a solution while $R^{\GAC(\lDM(\A),\B_\V})$ is empty, with $\B_\V=\rDM(\A,\B)$. It follows that \alg\ will not produce any output.
Consider now the case where $O'\neq \emptyset$, and where any $X\in O\setminus O'$ is a variable such that $(\A,O'\cup\{X\})$ is not
$\tpCovered$ through {\tt DM}. Let $\bar \A$ be the relational structure $\A\uplus\biguplus_{Y\in O'} \mathbb{S}_{\{Y\}}$, which is such that
$(\HG_{\bar \A},\HG_{\small \lDM(\A)})$ has a tree projection. Then, $\{X\}$ is not tp-covered in $\lDM(\bar \A)$. By Theorem~2-bis,  there are
an $r$-structure $\B$, and a relation $R\in\tau_\V$ with $\{X\}\subseteq \vars(R^{\bar \A_\V})$ such that $R^{\GAC(\bar
\A_\V,\B_\V)}[\{X\}]\supset\bar \A^\B[\{X\}]$, where $\B_\V=\rDM(\bar \A,\B)$. In fact, we can show that such a ``counterexample'' structure
$\B$ can be chosen in such a way that there are (full) solutions $h$ for the problem having the following property: some values in
$R^{\GAC(\bar \A_\V,\B_\V)}[\{X\}]\setminus\bar \A^\B[\{X\}]$ belongs to the generalized arc consistent structure $\B'$ where variables in $O'$
are fixed according to $h[O']$. Thus while enumerating such a solution $h[O']$, \alg\ generates wrong extensions of this solution to the
variable $X$. \hfill $\lhd$
\end{proof}

\subsection{Tight Characterizations for Enumerating Solutions with Polynomial Delay}

We next analyze the complexity of \alg.

\begin{theorem}\label{thm:WPD}
Let $\A$ be an $\ell$-structure, and $O\subseteq A$ be a set of variables. If $(\A,O)$ is $\tpCovered$ through {\tt DM}, then \alg\ runs WPD.
\end{theorem}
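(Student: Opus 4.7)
The plan is to bound the delay between two consecutive outputs of \alg\ by a polynomial in the input size $n=\size{(\A,\B,O)}$, building on the correctness property already established by Theorem~\ref{thm:correct}. First I would observe that each single invocation of ${\tt Propagate}$ performs only polynomial-time work: the construction of $\B_\V=\rDM(\A,\B)$ is polynomial by the assumptions on the decomposition method; enforcing $\GAC$ on $(\A_\V,\B_\V)$ takes polynomial time; and the remaining bookkeeping (reading the unary view $\dom(X_i)^{\B'_\V}$, iterating over its tuples, and overwriting it with a singleton) is clearly polynomial. Since the recursion depth is exactly $m=|O|$, a complete top-down chain of recursive calls costs $O(m\cdot q(n))$ time for a suitable polynomial~$q$.

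The crucial step, which I would establish next, is the \emph{no-dead-end invariant}: in every recursive call reached with consistent partial assignment $\tuple{a_1,\ldots,a_{i-1}}$, the set $\mathit{activeValues}$ computed at Step~2 is non-empty \emph{whenever} at least one solution of the corresponding restricted instance exists, and moreover every value in $\mathit{activeValues}$ does extend to such a solution. This is precisely the content of the argument already used inside the proof of Theorem~\ref{thm:correct}, and it hinges on the fact that $\tpCovered$ness is a purely structural property of the triple $(\A,O,\lDM(\A))$; none of these objects is modified along the recursion, since Step~7 alters only the unary view $\dom(X_i)^{\B'_\V}$ inside the right-hand view-structure. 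Hence the hypotheses of Theorem~\ref{thm:pods} continue to hold at every recursion level, yielding $\dom(X_j)^{\B'_\V}=\A^{\B''}[\{X_j\}]$ for all $j\geq i$, where $\B''$ is the right-hand structure obtained from $\B$ by pinning $X_1,\ldots,X_{i-1}$ to $a_1,\ldots,a_{i-1}$.

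The delay analysis follows almost immediately. Between two successive outputs at leaves of the recursion tree, the execution walks upward some $j\le m$ levels (closing for-loops whose $\mathit{activeValues}$ have been exhausted) and then descends back $j$ levels to the next leaf; each descending step performs one $\GAC$ computation plus polynomial bookkeeping, and by the no-dead-end invariant every value picked along the descent really extends to a full solution, so no descending step is ever wasted. Hence the work between two consecutive outputs is bounded by $O(m\cdot q(n))$, which is polynomial in~$n$. The very same bound controls the initial delay: either $\mathit{activeValues}=\emptyset$ at the top-level call and the algorithm correctly halts with empty output after a single $\GAC$, or the first solution is produced after one downward chain of at most $m$ recursive calls. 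The only delicate part of the argument, and the one I expect to require a careful induction on $i$, is the verification of the no-dead-end invariant, i.e.\ that fixing an output variable to a value taken from $\mathit{activeValues}$ and re-running $\GAC$ preserves exactly the hypotheses of Theorem~\ref{thm:pods} for all the still-to-be-enumerated variables.
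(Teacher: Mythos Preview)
Your proposal is correct and follows essentially the same approach as the paper's own proof: both argue that each call to ${\tt Propagate}$ does polynomial work, that the no-dead-end property (every value left in $\mathit{activeValues}$ after $\GAC$ extends to a full solution) follows from the structural $\tpCovered$ hypothesis via Theorem~\ref{thm:pods} exactly as in the proof of Theorem~\ref{thm:correct}, and that consequently at most $O(m)$ calls separate any two consecutive outputs (or the start of the computation from the first output/empty termination). Your presentation is in fact slightly more explicit about the up-then-down traversal between leaves of the recursion tree, but the underlying argument is identical.
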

\begin{proof}%[of Theorem~\ref{thm:WPD}]
Assume that $(\A,O)$ is $\tpCovered$ through {\tt DM}. By Theorem~\ref{thm:correct}, we know that \alg\ computes the set $\A^\B[O]$. Thus, if
the algorithm does not output any tuple, we can immediately conclude that the ECSP instance does not have solutions.
Concerning the running time, we preliminary notice that the initialization steps are feasible in polynomial time. In particular, computing
$\A_\V$ and $\B_\V$ is feasible in polynomial time, by the properties of the decomposition method {\tt DM} (see Section~\ref{sec:views}).
To characterize the complexity of the recursive invocations of ${\tt Propagate}$, we have to consider instead two cases.

In the case where there is no solution, we claim that the $r$-structure $\B_\V'$ obtained by enforcing generalized arc consistency in the first
invocation of ${\tt Propagate}$ (i.e., for $i=1$) is empty. Indeed, since $(\A,O)$ is $\tpCovered$ through {\tt DM}, then $\{X_1\}$ is
$\tpCovered$ in $\A_\V$---just compare Definition~\ref{def:tpcovered} and Definition~\ref{def:tpcovered-new}. It follows that we can apply
Theorem~\ref{thm:pods} on the set $\{X_1\}$ in order to conclude that, for every relation $R\in\tau_\V$ with $X_1\in \vars(R^{\A_\V})$,
$R^{\GAC(\A_\V,\B_\V)}[\{X_1\}]=\A^\B[\{X_1\}]$. Since, $\A^\B[O]$ is empty, the above implies that $\GAC(\A_\V,\B_\V)$ is empty too. Thus,
\alg\ invokes just once ${\tt Propagate}$, where the only operation carried out is to enforce generalized arc consistency, which is feasible in
polynomial time.

Consider now the case where $\A^\B[O]$ is not empty. Then, the first solution is computed after $m$ recursive calls of the procedure ${\tt
Propagate}$, where the dominant operation is to enforce generalized arc consistency on the current pair $(\A_\V,\B_\V)$. In particular, by the
arguments in the proof of Theorem~\ref{thm:correct}, it follows that ${\tt Propagate}$ does not have to backtrack to find this solution: after
enforcing generalized arc consistency at step $i$, any active value for $X_i$ is guaranteed to occur in a solution with the current fixed
values for the previous variables $X_j$, $1\leq j<i$. Since $\GAC$ can be enforced in polynomial time, this solution can be computed in
polynomial time as well.

To complete the proof, observe now that any solution is provided in output when ${\tt Propagate}$ is invoked for $i=m$. After returning a tuple
of values $\tuple{a_1,...,a_m}$,  ${\tt Propagate}$  may need to backtrack to a certain index $i'\geq 1$ having some further (different) value
$a_{i'}$ to be processed, fix $X_{i'}$ with this value,
 propagate this assignment, and continue by processing variable $X_{i'+1}$.
  Thus, at most $m$ invocations of ${\tt
Propagate}$ are needed to compute the next solution, and no backtracking step may occur before we found it. Therefore, \alg\ runs WPD.  \hfill
$\lhd$
\end{proof}

By the  above theorem and the definition of domain restricted variables, the following can easily be established.

\begin{corollary}\label{cor:WPD}
Let $\boA$ be any class of $\ell$-structures such that, for each $\A\in \boA$, $(\A,\DRV(\A))$ is $\tpCovered$ through {\tt DM}. Then, for every $r$-structure $\B$, and for every set of variables $O\subseteq \DRV(\A)$, the {\em ECSP} instance $(\A,\B,O)$ is solvable WPD.
\end{corollary}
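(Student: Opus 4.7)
The plan is to derive the corollary directly from Theorem~\ref{thm:WPD} via a monotonicity argument: if $(\A, \DRV(\A))$ is $\tpCovered$ through {\tt DM}, then so is $(\A, O)$ for every $O \subseteq \DRV(\A)$. Once this is established, the corollary is immediate, since Theorem~\ref{thm:WPD} applied to the pair $(\A, O)$ already guarantees that, for every $r$-structure $\B$, \alg\ enumerates $\A^{\B}[O]$ WPD.

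The key observation underlying the monotonicity step is that every variable $X \in \DRV(\A)$ already comes equipped in $\A$ with the unary domain constraint $\dom(X)$, each contributing the singleton hyperedge $\{X\}$ to $\HG_{\A}$; hence, adding the fresh singleton structures $\mathbb{S}_{\{X\}}$ for $X$ ranging in any subset of $\DRV(\A)$ leaves the underlying hypergraph of the augmented structure unchanged. Moreover, since $\dom(X)^{\A} = \{\tuple{X}\}$, every homomorphism of $\A$ into itself must fix each $X \in \DRV(\A)$, so the fresh relations $R_{\{X\}}$ carried by $\mathbb{S}_{\{X\}}$ are essentially redundant from the viewpoint of homomorphic equivalence.

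Concretely, I would let $\A^*_1 = \A \uplus \biguplus_{X \in \DRV(\A)} \mathbb{S}_{\{X\}}$ and $\A^*_2 = \A \uplus \biguplus_{X \in O} \mathbb{S}_{\{X\}}$, and pick a core $\A'_1$ of $\A^*_1$ such that $(\HG_{\A'_1}, \HG_{\lDM(\A)})$ admits a tree projection, which exists by hypothesis. Then I would take $\A'_2$ to be the restriction of $\A'_1$ to the vocabulary of $\A^*_2$, i.e., dropping only the symbols $R_{\{X\}}$ for $X \in \DRV(\A) \setminus O$. The canonical homomorphism $\A^*_1 \to \A'_1$ restricts to a homomorphism $\A^*_2 \to \A'_2$, while the inclusion $\A'_2 \hookrightarrow \A^*_2$ provides the other direction; thus $\A^*_2$ and $\A'_2$ are homomorphically equivalent. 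Because the singleton hyperedges dropped from $\HG_{\A'_1}$ were already present thanks to the $\dom(X)$ constraints (which live in the shared vocabulary $\tau$ of $\A$), we actually have $\HG_{\A'_2} = \HG_{\A'_1}$, so the same tree projection witnesses that $(\A, O)$ is $\tpCovered$ through {\tt DM}, by the ``any homomorphically equivalent structure'' reformulation allowed in the footnote to Definition~\ref{def:tpcovered-new}.

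The main obstacle is precisely the careful vocabulary bookkeeping in the monotonicity argument: one must verify that restricting to the smaller vocabulary genuinely preserves both the homomorphic equivalence class of the augmented structure and the hypergraph (so that the tree projection is inherited without modification). This hinges entirely on the preexistence of the $\dom(X)$ constraints in $\A$ for every $X \in \DRV(\A)$; without them, adding and removing $\mathbb{S}_{\{X\}}$ could alter the hypergraph by creating or destroying singleton hyperedges, and no direct transfer of the tree projection would be possible.
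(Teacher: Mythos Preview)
Your proposal is correct and matches the paper's intent: the paper does not spell out a proof but simply states that the corollary follows from Theorem~\ref{thm:WPD} together with the definition of domain-restricted variables, and your monotonicity argument is precisely the way to cash this out. The key observation you isolate---that for every $X\in\DRV(\A)$ the relation $\dom(X)$ already forces $h(X)=X$ under any endomorphism and already contributes the singleton hyperedge $\{X\}$, so that augmenting by any subfamily of the $\mathbb{S}_{\{X\}}$ changes neither the hypergraph nor the homomorphism type---is exactly what ``by the definition of domain restricted variables'' is pointing at.
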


In the case of bounded arity structures and if the (hyper)tree width is the chosen decomposition method, it is not hard to see that the result
in Corollary~\ref{cor:WPD} is essentially tight. Indeed, the implication $(2)\Rightarrow(1)$ in the theorem below easily follows from the well-known dichotomy for the decision version~\cite{G07}, which is obtained in the special case of ECSP instances without output variables ($O=\emptyset$).

\begin{theorem}
Assume $FPT\neq W[1]$. Let $\boA$ be any class of $\ell$-structures of bounded arity. Then, the following are equivalent:
\begin{enumerate}
\item[(1)] $\boA$ has bounded treewdith modulo homomorphic equivalence;
\item[(2)] For every $\A\in \boA$, for every $r$-structure $\B$, and for every set of variables $O\subseteq \DRV(\A)$, the {\em ECSP} instance $(\A,\B,O)$ is solvable WPD.
\end{enumerate}
\end{theorem}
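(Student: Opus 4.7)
The plan is to prove the two directions separately, with $(1)\Rightarrow(2)$ going through Corollary~\ref{cor:WPD} and $(2)\Rightarrow(1)$ reducing to Grohe's dichotomy for the decision problem by taking $O=\emptyset$.

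For $(1)\Rightarrow(2)$, fix the threshold $k$ witnessing bounded treewidth modulo homomorphic equivalence for $\boA$, and set ${\tt DM}=tw_k$. Let $\A\in\boA$, and let $\A^*$ be a core of $\A$ with $\tw(\A^*)\le k$. The first observation I would make is that every domain-restricted variable lies in $A^*$ and is fixed by any retraction: indeed $\dom(X)^{\A^*}\subseteq\dom(X)^{\A}=\{\langle X\rangle\}$, and it is nonempty because a homomorphism $\A\to\A^*$ exists, so it equals $\{\langle X\rangle\}$, forcing $X\in A^*$. Consequently, when we augment $\A$ to $\A^+:=\A\uplus\biguplus_{X\in\DRV(\A)}\mathbb{S}_{\{X\}}$, the extra unary tuples $\langle X\rangle$ are already pinned by the existing $\dom(X)$ constraints; extending $\A^*$ by interpreting each fresh symbol $R_{\{X\}}$ as $\{\langle X\rangle\}$ yields a substructure of $\A^+$ that is a core of $\A^+$, and whose associated hypergraph is unchanged, i.e.\ equal to $\HG_{\A^*}$ (the singletons $\{X\}$ are already hyperedges coming from the domain constraints). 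Since $\tw(\A^*)\le k$, a tree decomposition of width $\le k$ provides an acyclic hypergraph $\HG_a$ sandwiched between $\HG_{\A^*}$ and $\HG_{\ell\text{-}tw_k(\A)}$, hence a tree projection. By Definition~\ref{def:tpcovered-new}, $(\A,\DRV(\A))$ is $\tpCovered$ through $tw_k$, and Corollary~\ref{cor:WPD} delivers the WPD bound uniformly for every $\B$ and every $O\subseteq\DRV(\A)$.

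For $(2)\Rightarrow(1)$, I would specialize condition~(2) to $O=\emptyset\subseteq\DRV(\A)$. By the convention $\A^\B[\emptyset]=\{h_\phi\}$ if a homomorphism exists and $\emptyset$ otherwise, WPD enumeration of a constant-size output set is exactly polynomial-time decision. Thus $\mbox{CSP}(\boA,\all)\in\Pol$. Under the assumption $\FPT\ne W[1]$ and the (implicit) recursive enumerability of $\boA$, the hint mentioned in the excerpt invokes Grohe's dichotomy~\cite{G07}: polynomial-time solvability of the decision problem on a bounded-arity r.e.\ class forces the class to have bounded treewidth modulo homomorphic equivalence, which is condition~(1).

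The only genuinely technical point is the bookkeeping in $(1)\Rightarrow(2)$: one must verify that an $\mathbb{S}_{\{X\}}$-augmentation does not inflate the core of $\A$ nor its hypergraph, so that $\tw(\core(\A))\le k$ can be transferred directly to the augmented structure and the tree-projection condition of Definition~\ref{def:tpcovered-new} is met. The preservation of domain-restricted variables inside every core is the pivotal lemma; once it is in hand, everything else (the hyperedge containment $\HG_{\A^*}\le\HG_{\ell\text{-}tw_k(\A)}$ and invoking Corollary~\ref{cor:WPD}) is routine. The reverse direction, by contrast, is essentially just a pointer to the known decision-version dichotomy and carries no further difficulty.
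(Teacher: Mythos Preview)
Your proposal is correct and follows essentially the same approach as the paper. The paper only sketches the argument (pointing to Corollary~\ref{cor:WPD} for $(1)\Rightarrow(2)$ and to Grohe's dichotomy with $O=\emptyset$ for $(2)\Rightarrow(1)$), while you carefully fill in the bookkeeping showing that domain-restricted variables survive into every core and that the $\mathbb{S}_{\{X\}}$-augmentation leaves the core's hypergraph unchanged, so that Corollary~\ref{cor:WPD} indeed applies; your observation that recursive enumerability of $\boA$ is an implicit hypothesis needed to invoke~\cite{G07} is also accurate.
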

 %todo
%Note that the above result is a generalization of the well-known dichotomy for the decision version~\cite{G07}, which is obtained in the
%special case of ECSP instances without output variables ($O=\emptyset$).

Actually, from an application perspective of this result, we observe that there is no efficient algorithm for
the no-promise problem for such classes.
%, in that,
%there is no way to check whether a given projected solution is correct,
%unless we have some specific extra-knowledge about the class of instances at hands.
%In fact, the following result
In fact, the following proposition formalizes and generalizes previous observations from different authors about
the impossibility of actually trusting positive answers in the (promise) decision problem~\cite{SGG08,BDGM09}.
%We show that, unless $\Pol=\NP$, there is no polynomial-time algorithm that in general may exploit the
%above property to actually compute a full solution (or some polynomial certificate that we actually got a projected solution), or disprove the
%``promise''.

We say that a pair $(h,c)$ is a certified projected solution of $(\A,\B,O)$ if, by using the certificate $c$,
one may check in polynomial-time (w.r.t. the size of $(\A,\B,O)$) whether $h\in \A^\B[O]$. E.g., any full
solution extending $h$ is clearly such a certificate. If $O=\emptyset$, $h$ is also empty, and $c$ is intended
to be a certificate that  $(\A,\B)$ is a ``Yes'' instance of the decision CSP. Finally, we assume that the empty
output is always a certified answer, in that it entails that the input is a ``No'' instance, without the need
for an explicit certificate of this property.

\begin{proposition}\label{prop:NP}
The following problem is $\NP$-hard:
Given any {\em ECSP} instance $(\A,\B,O)$, compute a certified solution in $\A^\B[O]$, whenever $(\A,O)$ is $\tpCovered$ through {\tt DM}; otherwise, there are no requirements and any output is acceptable.
Hardness holds even if {\tt DM} is the treewidth method with $k=2$, the vocabulary contains just one binary relation symbol, and $O=\emptyset$.
\end{proposition}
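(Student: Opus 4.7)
The plan is to reduce 3-COLORABILITY to the promise problem. Given an undirected graph $G$, I would build in polynomial time the ECSP instance $(\A, \B, \emptyset)$ over a single symmetric binary relation, where $\A$ encodes $G \uplus K_3$ and $\B$ encodes $K_3$. The two facts I would need to establish are: (i) $\A \to \B$ iff $G$ is 3-colorable, which is immediate since $K_3 \to K_3$ via the identity, $G \to K_3$ is 3-colorability by definition, and $G$ is a substructure of $\A$; and (ii) if $G$ is 3-colorable, then $(\A, \emptyset)$ is $\tpCovered$ through $tw_2$, because $\A$ and $K_3$ become homomorphically equivalent in that case (so the core of $\A$ is isomorphic to $K_3$), and the hypergraph of $K_3$ admits a tree projection with respect to the views generated by $\ell$-$tw_2$ via a single bag carrying all three of its vertices.

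I would then exploit any purported polynomial-time algorithm $M$ for the promise problem as follows: run $M$ on $(\A, \B, \emptyset)$ and attempt to verify the certificate $c$ in its output; declare ``$G$ is 3-colorable'' precisely when $M$ returns a non-empty certified solution whose certificate passes verification. Correctness is forced by the setup: in the 3-colorable case the promise holds by (ii), so $M$ is required to produce a certified solution, and by (i) this solution is genuine and must therefore verify; in the non-3-colorable case $\A \not\to \B$ by (i), so no valid certificate of $\A \to \B$ exists, and any output of $M$---including spurious certificates emitted during the unconstrained ``promise-fails'' branch---will fail verification, yielding the correct negative answer. The construction, running $M$, and verifying the certificate are all polynomial, so this is a Karp reduction from 3-COL.

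The main obstacle is establishing fact (ii). The identification of $K_3$ as the core of $\A = G \uplus K_3$ in the 3-colorable case is a routine application of the homomorphic equivalence between $\A$ and $K_3$ (combining a 3-coloring of $G$ with the identity on $K_3$, and embedding $K_3$ into $\A$) together with the uniqueness of the core up to isomorphism. The tree-projection statement then reduces to observing that the view set generated by $\ell$-$tw_2$ contains a hyperedge covering all three vertices of $K_3$, yielding a trivially acyclic tree projection. A subtle technical point worth making explicit is the role of certificate verification: the proposition's ``any output is acceptable'' clause allows $M$ to emit certificate-looking strings even when the promise fails, but such strings must be rejected because no genuine homomorphism $\A \to \B$ exists in the NO-instance case---this is exactly what makes the reduction robust despite the algorithm's adversarial freedom outside the promise.
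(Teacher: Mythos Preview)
Your proposal is correct and follows essentially the same route as the paper: both reduce from 3-colorability by encoding $G$ together with a guaranteed triangle as $\A$ and $K_3$ as $\B$, observe that in the 3-colorable case the triangle is a core of $\A$ admitting a trivial tree projection under $tw_2$, and then verify the oracle's output certificate to decide 3-colorability. The only differences are cosmetic (the paper attaches the triangle to $G$ via a shared vertex rather than using a disjoint union) and terminological (the paper correctly calls this a polynomial-time Turing reduction rather than a Karp reduction, since one post-processes the oracle's output).
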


\begin{proof}%[of Proposition~\ref{prop:NP}]
We show a polynomial-time Turing reduction from the $\NP$-hard 3-colorability problem. Let $M$ be a Turing transducer that solves the problem,
that is,
%Thus, what we know is that it outputs something after polynomially many steps, on every given input.
whenever  $(\A,O)$ is $\tpCovered$ through {\tt DM},
  at the end of a computation on a given input $(\A,\B,O)$ its output tape contains a certified solution in $\A^\B[O]$, otherwise, everything is acceptable. In particular, we do not pretend that $M$ recognizes whether the above condition is fulfilled.

Then, we use $M$ as an oracle procedure within a polynomial time algorithm that solves the 3-colorability problem. Let $G=(N,E)$ be any given
graph, and assume w.l.o.g. that it contains a triangle $\{n_1,n_2\}$, $\{n_2,n_3\}$, and $\{n_3,n_1\}$. (Otherwise, select any arbitrary vertex
$n_1$ of $G$ and connect it to two fresh vertices $n_2$ and $n_3$, also connected to each other. It is easy to check that this new graph is
3-colorable if, and only if, the original graph $G$ is 3-colorable, as the two fresh vertices have no connections with the rest of the graph.)
Build the (classical) binary CSP $(\A_G,\B_{3c})$ where the vocabulary contains one relation symbol $R_E$,
 and the set of variables is $A_G=N$.
 Moreover, $R_E^{\A_G}= \{ \tuple{n_i,n_j} \mid \{n_i,n_j\}\in E\}$,
  and $R_E^{\B_{3c}}= \{\tuple{c,c'} \mid c\neq c', \{c,c'\}\subseteq \{1,2,3\}\}$.
 Consider the treewidth method for $k=2$, and compute in polynomial time the pair
  $(\A_\V,\B_\V)$ where $\A_\V={\it \ell\mbox{-}tw_2(\A_G)}$ and
  $\B_\V={\it r\mbox{-}tw_2(\B_{3c})}$. In particular, observe that the hypergraph $\HG_{\A_\V}$
  contains a hyperedge $\{n_i,n_j,n_l\}$ for every triple of vertices of $G$.

It is well-known and easy to see that $G$ is 3-colorable if, and only if, $\A_G^{\B_{3c}}\neq\emptyset$, that is, if there is a homomorphism
from $\A_G$ to a triangle (indeed, $\B_{3c}$ is a triangle). Therefore, if $G$ is 3-colorable, the triangle substructure
 $\A'$ such that $R_E^{\A'}= \{ \tuple{n_i,n_j} \mid \{i,j\}\subset \{1,2,3\}\mid i\neq j\}$ is homomorphically equivalent to $\A_G$. Moreover, in this case
  the hypergraph consisting of the single hyperedge $\{n_1,n_2,n_3\}$ is a
   tree projection of $\HG_{\A'}$ w.r.t. $\HG_{\A_\V}$, or, equivalently, the treewidth of $\A'$ is $2$.

 Now, run $M$ on input $(\A_G,\B_{3c},\emptyset)$ and consider its first output certificate $c$---say, for the sake of presentation, a full solution for the problem.
  Then check in polynomial time whether $c$ is a legal certificate---in our exemplification, whether it encodes
  a solution of the given instance.
  % an assignment of values (colors) to variables in $A$ (vertices of $G$),
  % and whether this assignment is actually a 3-coloring of $G$.
 If this is the case, we know that $G$ is 3-colorable; otherwise, we conclude that $G$ is not 3-colorable.
  Indeed, $M$ must be correct on 3-colorable graphs, because there exists a tree projection of
  $\HG_{\A'}$ (and thus of any core of $\A_G$---note that $O=\emptyset$ and thus there is no further requirement) w.r.t. $\HG_{\A_\V}$.
  Since all these steps are feasible in polynomial-time, we are done. \hfill $\lhd$
\end{proof}

\section{Enumerating Solutions over Arbitrary Output Variables}

In this section we consider structural properties that are independent of output variables, so that tractability must hold for any desired sets
of output variables. For this case, we are able to provide certified solutions WPD, which seems the more interesting notion of tractability for
actual applications.

\begin{figure}[t]
\centering \fbox{
\parbox{0.68\textwidth}{
\begin{tabular}{l}
  \textbf{Input}: An ECSP instance $(\A,\B,O)$, where $O=\{X_1,\ldots,X_m\}$;\\
%  \ \ \ \ \ \ \ \ \ \ \ \  a decomposition method {\tt DM}; \\ %n $\ell$-structure $\A_\V$, and a legal $r$-structure $\B_\V$;\\
  \textbf{Output}: for each solution $h\in \A^\B[O]$, a certified solution $(h,h')$;\\
  \textbf{Method}: let $A=\{X_1,...,X_m,X_{m+1},...,X_n\}$ be the variables of $\A$;\\
  \ \ \ \ \ \ \ \ \ \ \ \ \ \ \ \ update $(\A,\B,A)$ with any of its domain restricted versions;\\
  \ \ \ \ \ \ \ \ \ \ \ \ \ \ \ \ let $\A_\V:=\lDM(\A)$, \ \ $\B_\V:=\rDM(\A,\B)$;\\
  \ \ \ \ \ \ \ \ \ \ \ \ \ \ \ \ invoke ${\tt CPropagate}$$(1,(\A_\V,\B_\V),m,\tuple{})$; \\
%  \hline\\
%  \vspace{-6.5mm}\\
%  \emph{1.}\ \ \ \textbf{let} $\V_1$ be a copy of $\V$;   \\
%  \emph{2.}\ \ \ \textbf{Propagate}$(\V_1,m,\tuple{})$; \\
  \hline\\
  \vspace{-7.4mm}\\
  \hline\\
  \vspace{-7mm}\\
  \textbf{Procedure} ${\tt CPropagate}$($i$: integer, $(\A_\V,\B_\V)$: pair of structures, $m$: integer,\\
  \hspace{31mm}       $\tuple{a_1,...,a_{i-1}}$: tuple of values in $A^i$);    \\
  \textbf{begin}\\
  \textsc{1.}\ \ \ let $\B'_\V:=\GAC(\A_\V,\B_\V)$;  \\
  \textsc{2.}\ \ \ \textbf{if} $i>1$ and $\B'_\V$ is empty \textbf{then} output ``{\tt DM} failure'' and \textsc{Halt};\\
  \textsc{3.}\ \ \ let ${\it activeValues}:=\dom{(X_i)}^{\B'_\V}$;  \\
  \textsc{4.}\ \ \ \textbf{for each} element $\tuple{a_i}\in {\it activeValues}$ \textbf{do}\\
  \textsc{5.}\ \ \ $\mid$ \ \ \ \ \textbf{if} $i=n$ \textbf{then}\\
  \textsc{6.}\ \ \ $\mid$ \ \ \ \ $\mid$ \ \ \ \textbf{output} the certified solution $(\tuple{a_1,...,a_m},\tuple{a_{m+1},...,a_n})$;\\
  %\quad \quad {\em /$\ast$ with $\tuple{a_{m+1},...,a_n}$ being the certificate} $\ast$/\\
  \textsc{7.}\ \ \ $\mid$ \ \ \ \ \textbf{else}\\
  \textsc{8.}\ \ \ $\mid$ \ \ \ \ $\mid$ \ \ \ update $\dom{(X_i)}^{\B'_\V}$ with $\{\tuple{a_i}\}$;
            \quad \quad {\em /$\ast$ $X_i$ is fixed to value $a_i$ }$\ast$/\\
  \textsc{9.}\ \ \ $\mid$ \ \ \ \ $\mid$ \ \ \ ${\tt CPropagate}$$(i+1,(\A_\V,\B'_\V),m,\tuple{a_1,...,a_{i-1},a_i})$; \\
  \textsc{10.}\ \ \hspace{-1.1mm}$\lfloor$\hspace{0.25mm} \ \ \ $\lfloor$ \ \ \ \textbf{if} $i>m$ \textbf{then} \textsc{Break};\\
  \textbf{end.}\\
\end{tabular}
}}\vspace{-2mm}
  \caption{\textbf{Algorithm} \findCertificate.} \label{fig:algoritmoCertified}\vspace{-3mm}
\end{figure}

Figure~\ref{fig:algoritmoCertified} shows the \findCertificate\ algorithm computing all solutions of an ECSP instance, with a certificate for
each of them. The algorithm is parametric w.r.t. any chosen decomposition method {\tt DM}, and resembles in its structure the \alg\ algorithm.
The main difference is that, after having found an assignment $\tuple{a_1,...,a_m}$ for the variables in $O$, \findCertificate\ still iterates
over the remaining variables in order to find a certificate for that projected solution. Of course, \findCertificate\ does not backtrack over
the possible values to be assigned to the variables in $\{X_{m+1},...,X_n\}$, since just one extension suffices to certify that this partial
solution can be extended to a full one. Thus, we break the cycle after an element $\tuple{a_i}$ is picked from its domain and correctly
propagated, for each $i>m$, so that in these cases we eventually backtrack directly to $i=m$ (to look for a new projected solution).

Note that \findCertificate\ incrementally outputs various solutions, but it halts the computation if the current $r$-structure $\B'_\V$ becomes
empty. As an important property of the algorithm, even when this abnormal exit condition occurs, we are guaranteed that all the elements
provided as output until this event are indeed solutions. Moreover, if no abnormal termination occurs, then we are guaranteed that all
solutions will actually be computed. Correctness follows easily from the same arguments used for \alg, by observing that, whenever
$(\HG_\A,\HG_{\small \lDM(\A)})$ has a tree projection, the full set of variables $A$ is $\tpCovered$ through {\tt DM}.

\begin{theorem}\label{thm:findCertificate}
Let $\A$ be an $\ell$-structure, and $O\subseteq A$ be a set of variables. Then, for every $r$-structure $\B$, \findCertificate\ computes WPD a
subset of the solutions in $\A^\B[O]$, with a certificate for each of them. Moreover,
\begin{itemize}
\item If \findCertificate\ outputs ``{\tt DM} failure'', then $(\HG_\A,\HG_{\small \lDM(\A)})$ does not have a tree projection;
    \item otherwise, \findCertificate\ computes WPD %short all the solutions in
    $\A^\B[O]$.
\end{itemize}
\end{theorem}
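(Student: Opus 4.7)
The plan is to prove the statement in three stages---soundness and polynomial delay, then the first bulleted claim, then the second---reusing the framework developed in Theorems~\ref{thm:correct}, \ref{thm:correct2}, and \ref{thm:WPD}. For \textbf{soundness and polynomial delay}: when Step~6 fires, the successive domain overwrites at Step~8 and intervening GAC passes have narrowed every $\dom(X_j)^{\B'_\V}$ to the single value $\{a_j\}$; moreover $\B'_\V$ is non-empty, because Step~2 would otherwise have halted the computation. Since the instance is connected and GAC is at fixed point, no base view is empty, so each base view $R_t^{\B'_\V}$ contains exactly the projection of $\tuple{a_1,\ldots,a_n}$ onto its scope, and the legality invariant $R_t^{\B'_\V}\subseteq R^\B$ (preserved by GAC and by the Step~8 overwrites) forces that projection to lie in $R^\B$. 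Hence the full assignment is a homomorphism from $\A$ to $\B$, and the pair $(\tuple{a_1,\ldots,a_m},\tuple{a_{m+1},\ldots,a_n})$ is a certificate verifiable in polynomial time. For polynomial delay, the \textsc{Break} at Step~10 for every level $i>m$ forces the control flow between two consecutive outputs to walk up through at most $n-m$ returns and then descend through at most $n-j$ recursive calls, where $j\leq m$ is the deepest level still having a fresh value in ${\it activeValues}$; each such step performs one GAC computation and polynomial bookkeeping, and the same bound applies before the first output and before a ``{\tt DM} failure'' halt.

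For the \textbf{first bulleted claim}, I would argue the contrapositive. If $(\HG_\A,\HG_{\lDM(\A)})$ admits a tree projection, then (as observed just before the theorem) the full set of variables $A$ is $\tpCovered$ through {\tt DM}, and consequently every singleton $\{X_i\}$ is $\tpCovered$ in $\lDM(\A)$. By induction on the recursion depth $i$, I would show that the restricted CSP instance obtained by fixing $X_1=a_1,\ldots,X_{i-1}=a_{i-1}$ always admits a solution: in the inductive step, Theorem~\ref{thm:pods} applied to $\{X_{i-1}\}$ and to the legal $r$-structure reached just before Step~4 at depth $i-1$ guarantees that the picked value $a_{i-1}$ extends to a full solution of the preceding restricted instance, so the instance at depth $i$ inherits a solution as well. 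Applying Theorem~\ref{thm:pods} once more at depth $i$ then shows that $\dom(X_i)^{\B'_\V}$ stays non-empty after GAC, and Step~2 never triggers.

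For the \textbf{second bulleted claim}, fix $h\in\A^\B[O]$ and any full solution $h^*$ of $(\A,\B)$ extending $h$. Because GAC never removes a value that participates in some solution, a straightforward induction on $i$ shows that whenever the recursion has fixed $X_1,\ldots,X_{i-1}$ to $h^*(X_1),\ldots,h^*(X_{i-1})$, the value $h^*(X_i)$ lies in ${\it activeValues}$ at the next call. Under the no-failure hypothesis the for-loop at each level $i\leq m$ must exhaust its ${\it activeValues}$, so in particular it eventually reaches the partial assignment $\tuple{h^*(X_1),\ldots,h^*(X_m)}$, from which descent through levels $m+1,\ldots,n$ (at each step picking a value whose extension cannot empty $\B'_\V$ by the no-failure hypothesis, and which by the soundness argument must land on a genuine full solution) produces an output $(h,h'')$ for some certificate $h''$. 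The \textsc{Break} at Step~10 for $i>m$ finally guarantees that each such $h$ is emitted exactly once. The main obstacle, which I would handle via a careful invariant argument, is verifying that after any chain of Step~8 overwrites interleaved with Step~1 GAC passes the current $\B'_\V$ remains \emph{legal} with respect to the correspondingly restricted CSP instance; this is precisely what enables Theorem~\ref{thm:pods} to be invoked recursively inside the two inductions.
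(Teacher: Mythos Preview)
Your proposal is correct and follows essentially the same approach as the paper, which in fact does not give a formal proof of this theorem but only remarks (immediately before the statement) that ``Correctness follows easily from the same arguments used for \alg, by observing that, whenever $(\HG_\A,\HG_{\small \lDM(\A)})$ has a tree projection, the full set of variables $A$ is $\tpCovered$ through {\tt DM}.'' Your three-stage plan unfolds precisely this remark: you reuse the machinery of Theorems~\ref{thm:correct} and~\ref{thm:WPD} via the same key observation, and you correctly recognise that unconditional soundness (the ``subset of solutions'' claim) must be argued directly from the base-view legality invariant rather than from Theorem~\ref{thm:pods}; one small imprecision is that $\dom(X_n)$ is not overwritten before Step~6 fires, but your argument still goes through because GAC at level $n$ already forces every surviving $a_n$ to be consistent with the fixed singletons $\dom(X_1),\ldots,\dom(X_{n-1})$.
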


Moreover, we next give some evidence that, for bounded arity classes of instances, we cannot do better than this. In particular, having bounded
width tree-decompositions of the full structure seems a necessary condition for the tractability of the enumeration problem WPD w.r.t.
arbitrary sets of output variables (and for every $r$-structure).

\begin{figure}[t]
  \centering
  \includegraphics[width=0.32\textwidth]{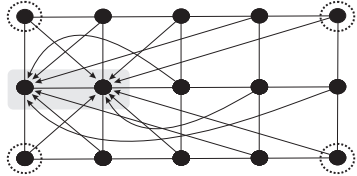}\vspace{-3mm}
  \caption{The undirected-grid structure $\A_u$, and a mapping to one of its cores.}\label{fig:grid}\vspace{-5mm}
\end{figure}

The main gadget of the proof that tree-decompositions are necessary for tractability is based on a nice feature of grids. Figure~\ref{fig:grid}
shows the basic idea for the simplest case of a relational structure $\A_u$ with only one relation symbol $R_u$ such that $R_u^{\A_u}$ is (the
edge set of) an undirected grid. Then, any substructure $\A_1$ of $\A_u$ where $R_u^{\A_1}$ contains just one tuple is a core of $\A_u$.
However, if we consider the variant of $\A_u$ where there is a domain constraint $\dom(X)$ for every corner $X$ of the grid (depicted with the
circles in the figure), then the unique core is the whole structure. In fact, we next prove that this property holds for any relational
structure whose Gaifman graph is a grid.

\vspace{-1mm}\begin{lemma}\label{lem:grid}
Let $\A$ be an $\ell$-structure whose Gaifman graph is a grid $G$. Moreover, let $O\subseteq A$ be the set of its four corners, and assume they
are domain restricted, i.e., $O\subseteq \DRV(\A)$. Then, $\A$ is a core.
\end{lemma}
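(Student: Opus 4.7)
The plan is to show that every endomorphism $h:\A\to\A$ fixes every variable pointwise, which immediately implies that no proper substructure of $\A$ is a homomorphic image of $\A$, hence $\A$ is a core. The first ingredient is the domain restriction on the corners: for each corner $X\in O$ we have $\{\langle X\rangle\}=\dom(X)^\A$, and since $h$ is a homomorphism, $\langle h(X)\rangle\in\dom(X)^\A$ forces $h(X)=X$. Thus every endomorphism fixes the four corners of the grid.

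Next I would show that $h$ is non-expansive with respect to the graph distance $\mathrm{dist}_G$. If $(u,v)$ is an edge of $G$, then $u$ and $v$ co-occur in some tuple $t\in R^\A$, and hence $h(u)$ and $h(v)$ co-occur in $h(t)\in R^\A$; consequently, $h(u)$ equals $h(v)$ or is adjacent to it in $G$. Iterating along a shortest $u$--$v$ path yields $\mathrm{dist}_G(h(u),h(v))\le \mathrm{dist}_G(u,v)$.

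The third step is a geometric observation specific to grids. For an $n\times m$ grid, any vertex $v=(i,j)$ satisfies $\mathrm{dist}_G(v,(1,1))+\mathrm{dist}_G(v,(n,m))=(n-1)+(m-1)=\mathrm{dist}_G((1,1),(n,m))$, and an analogous identity holds for the anti-diagonal pair $(1,m),(n,1)$. Combining this with the non-expansiveness of $h$, the fact that corners are fixed, and the triangle inequality applied at $h(v)$ to the opposite corners, every inequality in the chain $\mathrm{dist}_G((1,1),(n,m))\le\mathrm{dist}_G(h(v),(1,1))+\mathrm{dist}_G(h(v),(n,m))\le\mathrm{dist}_G(v,(1,1))+\mathrm{dist}_G(v,(n,m))$ must be an equality. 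Hence $\mathrm{dist}_G(h(v),c)=\mathrm{dist}_G(v,c)$ for every corner $c$. Since any grid vertex is uniquely determined by its distances to two adjacent corners (the row and column indices are recovered through sums and differences of those distances), $h(v)=v$ follows for every variable $v$.

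The main subtlety to handle carefully lies in the second step: $h$ need not be an injective graph homomorphism of $G$, as adjacent variables could in principle be collapsed (which would correspond to a tuple with repeated entries in $\A$). This is harmless for the argument because we only need that graph distances do not increase, which holds regardless of whether such collapses occur. Everything else reduces to elementary $L^1$-geometry on the grid.
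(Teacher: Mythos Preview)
Your proof is correct and takes a genuinely different route from the paper's. The paper argues by an iterative ``peeling'' process: once the four corners are fixed, each boundary row (or column) is a path whose image under $h$ is a walk of the same length between its fixed endpoints; since that boundary row is the \emph{unique} geodesic between two corners on the same side of the grid, it is mapped to itself pointwise. This fixes the whole outer frame, whence the endpoints of the second row are fixed, and the argument repeats inward row by row.

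Your approach replaces this induction by a single metric argument: non-expansiveness of $h$ together with the $L^1$ identity $\mathrm{dist}(v,c)+\mathrm{dist}(v,c')=\mathrm{dist}(c,c')$ for opposite corners $c,c'$ forces $h$ to preserve all four corner-distances, and two adjacent-corner distances already pin down the coordinates of $v$. This is more direct and avoids the (somewhat implicit) step in the paper's proof that the boundary row is the \emph{only} shortest path between its endpoints. Conversely, the paper's inductive scheme is perhaps more robust if one wanted to adapt the lemma to grid-like graphs where the clean $L^1$ identity fails but unique boundary geodesics still exist. Both arguments rest on the same two pillars you isolate: corners are fixed by the domain constraints, and any endomorphism is $1$-Lipschitz for the Gaifman-graph distance.
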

\begin{proof}%[of Lemma~\ref{lem:grid}]
Let $G$ be such a $k_1 \times k_2$ grid, and consider any homomorphism $h$ that maps $\A$ to any of its substructures $\A'$. Since the four
corners $v_{1,1},v_{1,k_2},v_{k_1,1},v_{k_1,k_2}$ are domain restricted, $h(X)=X$ must hold for each of them (as $\tuple{X}$ is the one tuple
of its domain constraint $\dom(X)^\A$). We say that such elements are fixed.

Consider the first row $r_1=(v_{1,1},v_{1,2},\ldots,v_{1,k_2})$ of $G$. We have seen that its endpoints, which are grid-corners, are fixed. It
is easy to check that $h$ cannot map the path $r_1$ to any path that is longer than $r_1$. However, $r_1$ is the shortest path connecting the
fixed endpoints  $v_{1,1}$ and $v_{1,k_2}$, and hence it must be mapped to itself. That is, $h(X)=X$ for every element $X$ occurring in $r_1$,
and thus, by the same reasoning, for every element $X$ occurring in the last row, and in the first and the last columns of the grid. It follows
that the endpoints $v_{2,1}$ and $v_{2,k_2}$ of the second row $r_2$ are fixed as well, and we may apply the same argument to show that all
elements occurring in $r_2$ are fixed, too. Eventually, row after row, we get that all elements of $A$ are fixed, and thus the identity mapping
is the only possible endomorphism for $\A$, which entails that $\A$ is a core. \hfill $\lhd$
\end{proof}

We also exploit the grid-based construction from \cite{G07}, whose properties relevant to this paper may be summarized as follows.

\begin{proposition}[\cite{G07}]\label{prop:grohe}
Let $k\geq 2$ and $K = \binom{k}{2}$, and let $\A$ be any $\tau$-structure such that the $(k \times K)$-grid is a minor of the Gaifman graph of
a core of $\A$. For any given graph $G$, one can compute in polynomial time (w.r.t. $\size{G}$) a $\tau$-structure $\B_{\A,G}$ such that $G$
contains a $k$-clique if, and only if, there is a homomorphism from $\A$ to $\B_{\A,G}$.
\end{proposition}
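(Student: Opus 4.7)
The plan is to follow Grohe's clique-to-homomorphism construction \cite{G07}, adapted to the exact formulation above. Let $\A^*$ be a core of $\A$ witnessing the hypothesis, so that the Gaifman graph $\mathit{Gaif}(\A^*)$ contains the $(k\times K)$-grid as a minor. Fix pairwise-disjoint connected ``branch sets'' $B_{i,j}\subseteq A^*$, for $i\in[k]$ and $j\in[K]$, such that contracting each $B_{i,j}$ to a point and keeping an edge between $B_{i,j}$ and $B_{i',j'}$ iff some element of one is adjacent to some element of the other in $\mathit{Gaif}(\A^*)$ yields exactly the $(k\times K)$-grid. Fix a bijection between the $K$ columns of the grid and the $\binom{k}{2}$ unordered pairs $\{p,q\}\subseteq[k]$; intuitively the $i$-th row will ``select'' the $i$-th vertex $v_i$ of a would-be $k$-clique in $G$, and the column indexed by $\{p,q\}$ will ``select'' an edge between $v_p$ and $v_q$.

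Next I would construct $\B_{\A,G}$ element-wise: the universe contains, for each row $i$, a copy $V(G)\times\{i\}$ of the vertex set of $G$ (thought of as the choices for $v_i$), and for each column $\{p,q\}$ a copy $E(G)\times\{\{p,q\}\}$ of the (directed) edge set of $G$ (thought of as choices for the edge between $v_p$ and $v_q$). For each tuple $\bar a\in R^{\A^*}$ I add to $R^{\B_{\A,G}}$ exactly the tuples one gets by choosing, for every coordinate of $\bar a$ that belongs to some $B_{i,j}$, a consistent image in the appropriate row-copy or column-copy, so that (i) all elements of one branch set are forced to the same choice, (ii) adjacencies at the grid-crossing of row $i$ with column $\{p,q\}$ force the edge selected for $\{p,q\}$ to be incident to the vertex selected for row $i$, and (iii) elements of $A^*$ outside all branch sets are given enough freedom (e.g., via a universal ``sink'' gadget, or by exploiting that $\A^*$ is a core) so they never block a homomorphism. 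The construction is polynomial in $\size{G}$ because $\A^*$ and the branch decomposition are fixed, and $\size{\B_{\A,G}}$ grows linearly with $|V(G)|+|E(G)|$.

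For correctness I would argue the two directions. If $G$ has a $k$-clique $\{v_1,\dots,v_k\}$, define a homomorphism $h:\A^*\to\B_{\A,G}$ sending every $a\in B_{i,j}$ to the canonical image encoding ``row $i$ chooses $v_i$'' or ``column $\{p,q\}$ chooses the edge $\{v_p,v_q\}$,'' and extend $h$ outside the branch sets using the freedom built into (iii); compose with the retraction from $\A$ to $\A^*$ to get $\A\to\B_{\A,G}$. Conversely, any homomorphism $\A\to\B_{\A,G}$ restricts to one $\A^*\to\B_{\A,G}$ (since $\A^*$ is a substructure homomorphically equivalent to $\A$). The connectedness of each $B_{i,j}$ together with constraint (i) forces a single choice per branch set; constraint (ii) at each of the $kK$ crossings then forces the $\binom{k}{2}$ selected edges to form a clique on the $k$ selected vertices.

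The main obstacle is item (iii): designing the relations of $\B_{\A,G}$ so that branch sets behave as rigid ``selector gadgets'' while all other elements of $\A^*$ can always be homomorphically placed regardless of the choices made in the grid. Here I would use the fact that $\A^*$ is a core: one can add, for each existing tuple of $R^{\A^*}$, an ``identity'' copy over fresh sink elements, so any partial homomorphism that has correctly handled the grid can be completed by routing the remainder of $\A^*$ through these sinks. Verifying that this completion never collapses distinct branch-set choices — and thus that the encoded $k$-clique is recovered faithfully — is the delicate bookkeeping step of the construction.
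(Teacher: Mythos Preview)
The paper does not prove this proposition at all: it is stated as a citation from Grohe~\cite{G07} (note the attribution in the proposition header and the lead-in sentence ``We also exploit the grid-based construction from~\cite{G07}\ldots''). So there is no ``paper's own proof'' to compare against; your proposal is an attempt to reconstruct Grohe's argument, whereas the present paper simply imports the result as a black box.

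As a reconstruction of Grohe's proof your outline is broadly on target---rows select clique vertices, columns select edges, grid adjacencies enforce incidence---but item~(iii) is a genuine gap rather than a bookkeeping detail. Your proposed fix, adding ``identity'' copies of tuples over fresh sink elements so that the non-grid part of $\A^*$ can always be routed away, does not obviously work: once you allow tuples of $R^{\B_{\A,G}}$ that live entirely over sinks, nothing prevents a homomorphism from sending elements of the branch sets themselves to those sinks, which would destroy the ``one choice per branch set'' property you need for the converse direction. Grohe's actual construction avoids this by not introducing any sink; instead every element of $\A^*$ (inside or outside a branch set) is mapped into the product universe, and the tuples admitted in $R^{\B_{\A,G}}$ are exactly those consistent with a single global choice of clique, so the ``freedom'' for non-grid elements comes from the product structure rather than from a separate escape hatch. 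If you want to turn your sketch into a proof, you should either follow that product construction or argue carefully why your sinks cannot absorb branch-set elements---as written, the converse implication is not established.
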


We can now prove the necessity of bounded treewidth for tractability WPD.

\begin{theorem}\label{thm:closed} Assume $\FPT\neq {\rm W}[1]$. Let $\boA$ be any bounded-arity recursively-enumerable class of
$\ell$-structures closed under taking minors. Then, the following are equivalent:

\vspace{-2mm}\begin{enumerate}
\item[(1)] $\boA$ has bounded treewdith;
\item[(2)] For every $\A\in \boA$, for every $r$-structure $\B$, and for every set of variables $O\subseteq A$, the {\em ECSP} instance $(\A,\B,O)$ is solvable WPD.
\end{enumerate}
\end{theorem}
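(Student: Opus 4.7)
Direction $(1) \Rightarrow (2)$ follows immediately from Theorem~\ref{thm:findCertificate}. If $k$ is the fixed treewidth bound of $\boA$, then for every $\A \in \boA$ the pair $(\HG_\A, \HG_{{\it \ell\mbox{-}tw_k(\A)}})$ admits a tree projection, so running \findCertificate\ with ${\tt DM} = tw_k$ on any input $(\A, \B, O)$ never outputs ``${\tt DM}$ failure'' and enumerates $\A^\B[O]$ with polynomial delay.

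For $(2) \Rightarrow (1)$ we argue by contraposition, assuming $\boA$ has unbounded treewidth and aiming to derive an FPT algorithm for the $\mathrm{W}[1]$-hard $k$-clique problem. Since $\boA$ is minor-closed, the excluded-grid-minor theorem provides, for every $k \geq 2$ and $K = \binom{k}{2}$, a structure $\A_k \in \boA$ whose Gaifman graph contains the $(k \times K)$-grid as a minor; closure under minors then lets us assume the Gaifman graph of $\A_k$ is precisely this grid. Let $O_k \subseteq A_k$ denote the four corners. By a direct adaptation of Lemma~\ref{lem:grid}, in which the fresh unary relations $R_{\{X\}}$ of the auxiliary structures $\mathbb{S}_{\{X\}}$ play the role of the domain constraints, the extended structure $\bar{\A}_k := \A_k \uplus \biguplus_{X \in O_k} \mathbb{S}_{\{X\}}$ is its own core; in particular, its core contains the $(k \times K)$-grid as a minor.

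Consequently, for any graph $G$, Proposition~\ref{prop:grohe} applied to $\bar{\A}_k$ yields in polynomial time in $\size{G}$ a right-hand structure $\B_{k,G}$ such that $G$ has a $k$-clique iff $\bar{\A}_k \to \B_{k,G}$. Letting $\B'_{k,G}$ be the reduct of $\B_{k,G}$ to the vocabulary of $\A_k$, and $\bm{t}_k$ the 4-tuple recording how $\B_{k,G}$ realizes the $\mathbb{S}_{\{X\}}$-relations, this condition is equivalent to the existence of $h \in \A_k^{\B'_{k,G}}$ with $h[O_k] = \bm{t}_k$. Under hypothesis (2), the ECSP instance $(\A_k, \B'_{k,G}, O_k)$ is solvable WPD, so the first projection produced by the algorithm (if any) becomes available in time $f(k) \cdot \size{G}^{O(1)}$, since $\size{\A_k} = O(k^3)$.

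The main technical obstacle is that $\A_k^{\B'_{k,G}}[O_k]$ may a priori contain exponentially many ``spurious'' projections different from $\bm{t}_k$---coming from homomorphisms that collapse the grid onto small substructures of $\B'_{k,G}$---so waiting for $\bm{t}_k$ in the enumeration could be too slow. This is resolved by strengthening the Grohe-style construction of $\B'_{k,G}$ (exploiting the bounded-arity vocabulary of $\A_k$ and the closure of $\boA$ under minors to make the corners structurally distinguishable on the right-hand side), so that the only possible projection onto $O_k$ of a homomorphism from $\A_k$ to $\B'_{k,G}$ is $\bm{t}_k$ itself. Then $\A_k^{\B'_{k,G}}[O_k] \in \{\emptyset, \{\bm{t}_k\}\}$, and the first WPD output (or the certified empty output) decides $k$-clique in FPT time, contradicting $\FPT \neq \mathrm{W}[1]$. \hfill $\lhd$
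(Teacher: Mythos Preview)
Your direction $(1)\Rightarrow(2)$ and the overall setup for $(2)\Rightarrow(1)$ match the paper. The gap is in how you handle the enumeration of $\A_k^{\B'_{k,G}}[O_k]$.

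You correctly worry that waiting for the ``right'' projection $\bm t_k$ among possibly many others could be too slow, and you propose to fix this by an unspecified strengthening of Grohe's construction so that $\bm t_k$ is the \emph{only} possible projection. This step is both unjustified (you do not say how to enforce it, and it is far from obvious that such a modification is compatible with Proposition~\ref{prop:grohe} while keeping $\A_k\in\boA$) and, more importantly, unnecessary. The obstacle you identify does not exist: $O_k$ has exactly four variables, and since $\B'_{k,G}$ is built in time polynomial in $\size{G}$, each of these variables ranges over at most $\size{\B'_{k,G}}$ values. Hence $\bigl|\A_k^{\B'_{k,G}}[O_k]\bigr|\le \size{\B'_{k,G}}^{4}$, which is polynomial in $\size{G}$. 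Under hypothesis~(2) the WPD enumerator therefore lists the \emph{entire} set $\A_k^{\B'_{k,G}}[O_k]$ in time $f(k)\cdot\size{G}^{O(1)}$, after which one simply checks whether any listed $4$-tuple satisfies the four domain constraints coming from $\bar\A_k$ (equivalently, equals $\bm t_k$). This is exactly the argument in the paper, and it closes the reduction without touching Grohe's construction.

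So replace the last paragraph of your sketch by the counting observation above; the rest of your argument is fine.
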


\begin{proof}
The fact that $(1) \Rightarrow (2)$ holds follows by specializing Theorem~\ref{thm:findCertificate} to the tree decomposition method. We next
focus on showing that $(2) \Rightarrow (1)$ also holds.

Let $\boA$ be such a bounded-arity class of $\ell$-structures closed under taking minors, and having unbounded treewidth. From this latter
property, by the Excluded Grid Theorem~\cite{RS86} it follows that every grid is a minor of the Gaifman graph of some $\ell$-structure in
$\boA$. Moreover, because this class is closed under taking minors, every grid is actually the Gaifman graph of some $\ell$-structure in
$\boA$.

Assume there is a deterministic Turing machine $M$ that is able to solve with polynomial delay any {\em ECSP} instance $(\A,\B,O)$ such that
$\A\in\boA$. We show that this entails the existence of an $\FPT$ algorithm to solve the ${\rm W}[1]$-hard problem $p$-\textsc{Clique}, which
of course implies $\FPT={\rm W}[1]$.

Let $G$ be a graph instance of the $p$-\textsc{Clique} problem, with the fixed parameter $k\geq 2$. We have to decide whether $G$ has a clique
of cardinality $k$. We enumerate the recursively enumerable class $\boA$ until we eventually find an $\ell$-structure $\A$ whose Gaifman graph
is the $(k \times K)$-grid. Let $\tau$ be its vocabulary. Note that searching for this structure $\A$ depends on the fixed parameter $k$ only
(in particular, it is independent of $G$).

Let $O\subseteq A$ be the variables at the four corners of this grid, and let $\A'$ be the extension of $\A$ such that, for every variable
$X\in O$, the vocabulary $\tau'$ of $\A'$ contains the domain relation-symbol $\dom(X)$. Thus, the Gaifman graph of $\A'$ is the same $(k
\times K)$-grid as for $\A$, but its four corners are domain-restricted in $\A'$ ($O\subseteq \DRV(\A')$). From Lemma~\ref{lem:grid}, $\A'$ is
a core.

Recall now the grid-based construction in Proposition~\ref{prop:grohe}: We can build in polynomial time (w.r.t. $\size{G}$) a structure
$\B'_{\A',G}$ such that there is a homomorphism from $\A'$ to $\B'_{\A',G}$ if, and only if, $G$ has a clique of cardinality $k$.

Consider the ECSP instance $(\A,\B,O)$ where $\A\in\boA$ by construction, and $\B$ is the restriction of $\B'_{\A',G}$ to the vocabulary
$\tau$. Thus, compared with $\B'_{\A',G}$, the $r$-structure $\B$ may miss the domain constraint $\dom(X)$ for some output variable $X\in O$.
It is easy to see that $h$ is a homomorphism from $\A'$ to $\B'_{\A',G}$ if, and only if, $h$ is a homomorphism from $\A$ to $\B$ such that,
for every $X\in O$, $h(X)\in\dom(X)^{\B'_{\A',G}}$. Therefore, to decide whether such a homomorphism exists (and hence to solve the clique
problem), we can just enumerate WPD the set of solutions $\A^\B[O]$ and check whether the four domain constraints on the corners of $\A'$ are
satisfied by any of these solutions. Now, recall that $\B'_{\A',G}$ is built in polynomial time from $G$, and thus every variable may take only
a polynomial number of values, and of course all combinations of four values from $\dom(X)^{\B'_{\A',G}}$, $X\in O$, are polynomially many. It
follows that $M$ actually takes polynomial time for computing $\A^\B[O]$, and one may then check in polynomial time whether the additional
domain constraints in $\A'$ are satisfied or not by some solution in $\A^\B[O]$.

By combining the above ingredients, we got an $\FPT$ algorithm to decide whether $G$ has a clique of cardinality $k$.~\hfill~$\lhd$
\end{proof}

\end{document}